\begin{document}
%
% --- Author Metadata here ---
%\conferenceinfo{WOODSTOCK}{'97 El Paso, Texas USA}
%\CopyrightYear{2007} % Allows default copyright year (20XX) to be over-ridden - IF NEED BE.
%\crdata{0-12345-67-8/90/01}  % Allows default copyright data (0-89791-88-6/97/05) to be over-ridden - IF NEED BE.
% --- End of Author Metadata ---

\title{A Convergence Theorem for the Graph Shift-type Algorithms}
%\subtitle{[Extended Abstract]}
%\titlenote{A full version of this paper is available as
%\textit{Author's Guide to Preparing ACM SIG Proceedings Using
%\LaTeX$2_\epsilon$\ and BibTeX} at
%\texttt{www.acm.org/eaddress.htm}}}
%
% You need the command \numberofauthors to handle the 'placement
% and alignment' of the authors beneath the title.
%
% For aesthetic reasons, we recommend 'three authors at a time'
% i.e. three 'name/affiliation blocks' be placed beneath the title.
%
% NOTE: You are NOT restricted in how many 'rows' of
% "name/affiliations" may appear. We just ask that you restrict
% the number of 'columns' to three.
%
% Because of the available 'opening page real-estate'
% we ask you to refrain from putting more than six authors
% (two rows with three columns) beneath the article title.
% More than six makes the first-page appear very cluttered indeed.
%
% Use the \alignauthor commands to handle the names
% and affiliations for an 'aesthetic maximum' of six authors.
% Add names, affiliations, addresses for
% the seventh etc. author(s) as the argument for the
% \additionalauthors command.
% These 'additional authors' will be output/set for you
% without further effort on your part as the last section in
% the body of your article BEFORE References or any Appendices.

\numberofauthors{1} %  in this sample file, there are a *total*
% of EIGHT authors. SIX appear on the 'first-page' (for formatting
% reasons) and the remaining two appear in the \additionalauthors section.
%
\author{
}
% There's nothing stopping you putting the seventh, eighth, etc.
% author on the opening page (as the 'third row') but we ask,
% for aesthetic reasons that you place these 'additional authors'
% in the \additional authors block, viz.
%\additionalauthors{Additional authors: John Smith (The Th{\o}rv{\"a}ld Group,
%email: {\texttt{jsmith@affiliation.org}}) and Julius P.~Kumquat
%(The Kumquat Consortium, email: {\texttt{jpkumquat@consortium.net}}).}
%\date{30 July 1999}
% Just remember to make sure that the TOTAL number of authors
% is the number that will appear on the first page PLUS the
% number that will appear in the \additionalauthors section.
\newtheorem{definition}{Definition}
\newtheorem{lemma}{Lemma}
\newtheorem{theorem}{Theorem}
\newtheorem{proposition}{Proposition}
\maketitle
\begin{abstract}
Graph Shift (GS) algorithms are recently focused as a promising approach for discovering dense subgraphs in noisy data. However, there are no theoretical foundations for proving the convergence of the GS Algorithm. In this paper, we propose a generic theoretical framework consisting of three key GS components: simplex of generated sequence set, monotonic and continuous objective function and closed mapping. We prove that GS algorithms with such components can be transformed to fit the Zangwill's convergence theorem, and the sequence set generated by the GS procedures always terminates at a local maximum, or at worst, contains a subsequence which converges to a local maximum of the similarity measure function. The framework is verified by expanding it to other GS-type algorithms and experimental results.
\end{abstract}

% A category with the (minimum) three required fields
\category{H.4}{Information Systems Applications}{Miscellaneous}
%A category including the fourth, optional field follows...
\category{D.2.8}{Software Engineering}{Metrics}[complexity measures, performance measures]

\terms{Theory}

\keywords{Convergence Proof; the Graph Shift Algorithm; the Dominant Sets and Pairwise Clustering; the Zangwill's Theorem.} % NOT required for Proceedings

\section{Introduction}

%Graph mining has become a popular research topic in machine learning and data mining. It has also been applied to many important fields, including social network analysis \cite{Chakrabrarti2006Graph} and web mining \cite{kosala2000web}. Among the multiple tasks in graph mining, an important problem is the subgraph mining, in particular the dense subgraph discovery, i.e., graph mode seeking.
%Dense subgraph discovery is an NP-complete problem. This leads to numerous heuristic methods proposed, including the work of Gibson et al \cite{gibson2005discovering} in utilizing shingling technique, GRASP Algorithm \cite{abello2002massive}, Cohesive Subgraph Visualization approach \cite{wang2008csv}, etc.
% , including XXXXXXXXX.
%including one framework called graph shift liked Algorithm, such as pairwise clustering\cite{pavan2007dominant}, and graph shift algorithm\cite{liu2010robust}.
%Among these various methods, Pavan et al \cite{pavan2007dominant} put forward a framework called Dominant Sets and Pairwise Clustering (DSPC) Algorithm, treating the dense subgraph discovery problem as a constrained optimization problem. Liu et al \cite{liu2010robust} further modified it, and proposed a Graph Shift (GS) Algorithm for the graph mode seeking.
%GS algorithms seldom consider the so-called outside data points, including the outliers. It outperforms other graph clustering methods in terms of dealing with noisy data sets. Also, since  GS algorithms are always operated on partial graphs, they save most of computational load and are very efficient.
The Graph Shift (GS) Algorithm\cite{liu2010robust} is a newly proposed algorithm in seeking the dense subgraph (also known as graph mode) and has received many attentions in machine learning and data mining area. As its tremendous advantages in removing the noise points in learning the dense subgraph, it is popularly used in image processing areas such as common pattern matching \cite{liu2010common}\cite{zhao2011robust}, computer vision area such as object tracking\cite{yuan2011discovering}\cite{chen2011detection}\cite{li2011graph}\cite{yang2011contour}, cluster analysis\cite{liu2010robust}, etc. Also, its low computation time and memory complexity make the realisty application feasible and attracted, especially in large-scale data size case. However, little theoretical work has been done to strengthen the solidness of the algorithm except for empirical demonstration, and to be honestly speaking, the correctness of the result always lays in doubt without theoretical guarantees.

The GS Algorithm originates from the the Dominant Sets and Pairwise Clustering(DSPC) Algorithm\cite{pavan2003new}\cite{pavan2007dominant}, which treated the dense subgraph discovery problem as a constrained optimization problem and gave a solid definition on the so-called "dominant set", i.e., dense subgraph. Further modifies the existing DSPC procedure, the GS Algorithm adds a neighborhood expansion procedure to reinforce the learning result. By iteratively employing the Replicator Dynamics and the new added "Neighborhood Expansion", the GS Algorithm claims to find the local maximum of the constraint objective function after finite number of iterations and further empirically demonstrates the claim.

However, to the best of our knowledge, none of the existing theoretical work has been done to certify the claim, nor does issues including the objective functions' behavior during the procedures, the stopping criteria. All of the above issues are closely related to one thing: the GS Algorithms' convergence property. That is to say, we need to ensure that the generated sequence set is convergent or at least contains a convergent subsequence set. It is certainly crucial to have a theoretical analysis about the GS Algorithm's convergence before we can confidently utilize it.

%Since an iterative procedure is employed in the GS Algorithm to find the local maximum of the constraint objective function, the convergence should be guaranteed before it widely applicates into reality. That is to say, we need to ensure that the generated sequence set is convergent or at least contains a convergent subsequence set. However, none of the existing work has proved the convergence of the GS Algorithm. It is certainly crucial to have a theoretical analysis about the GS Algorithm's convergence before we confidently utilize it.

Convergence theorem of algorithms has been a long-time discussion topic since decades ago in literature, including the ones with an iterative sequence set. Take the fuzzy $c$-means algorithm(FCM) for instance. The original strict proof is Provided by Bedzek \cite{bezdek1980convergence}\cite{hathaway1989relational}, who employed the Zangwill's theory \cite{zangwill1969nonlinear}\cite{gunawardana2005convergence} to establish the sequence's convergence property. Hoppner\cite{hoppner2003contribution} proved the convergence of the axis-parallel variant of the Gustafson-Kessel's algorithm\cite{gustafson1978fuzzy} by applying the Banach's classical contraction principle\cite{istratescu2002fixed}, which is the general case of FCM. Groll\cite{groll2005new} used the equivalence between the original and reduced FCM criteria, and conducted a new and more direct derivation of the convergence properties of FCM algorithms. Besides these, Selim\cite{selim1984k} treated the k-means clustering problem as a nonconvex mathematical program and provided a rigorous proof of the finite convergence of the K-means-type algorithms.

It may be intuitive that the FCM's convergence discussion could be applied to the GS Algorithm for both of them operating on an iterative set. However, it is not straightforward in implementation as the hardness of capturing GS Algorithm's complex characters. To address this problem, we provide a theoretical analysis of the algorithm. We start with the understanding of principal characteristics of the GS Algorithm by breaking it down into three key components, including generated sequence set, objective function and mapping, and then propose a framework to map such components to the conditions required in the Zangwill's theorem. We find that the mapped GS Algorithm can then perfectly match with the key requirements in the Zangwill's theorem. The convergence theorem for the GS Algorithm is then brought about.
% Comparing to this k-means typed algorithm's convergence proof, the most difficult point is the abstraction from detail algorithm.

Furthermore, a definition of the so-called "GS-type algorithm" is then given to provide us with a general view of algorithms with similar properties. More importantly, we build up a systematic learning on them by analyzing the objective functions' behaviors and observing their interesting resulting in the implemental results. We illustrate the proposed convergence theorem in terms of proving both the the GS Algorithm \cite{liu2010robust} and the DSPC Algorithm \cite{pavan2007dominant}, and confirm with the experimental results.

After all, our contributions here are listed as follows:
\begin{enumerate}
\item We theoretically analyze the convergence behavior of the GS Algorithm.
\item We have proven both the GS Algorithm and the DSPC Algorithm terminates a local maximum value, or at least contains a subsequence which converges to a local maximum.
\item A convergence proof framework is builded to make a better generalization of our work.
\end{enumerate}

The paper is organized as follows. Section \ref{sec_2} introduces the principle of the GS Algorithm and also a details description of it. In Section \ref{sec_3}, we first introduce the Zangwill's convergence theorem, and then extracts three key components in the GS Algorithm, mapping them to the Zangwill's properties. Section \ref{sec_4} discusses the convergence of the GS Algorithm and also analyzes some features of the algorithm. We extend the convergence proof to other GS-type algorithms and build up a framework in Section \ref{sec_5}. Experiments are conducted in Section \ref{sec_6} to verify the convergence theorem and behavior. Conclusions and future work can be found in Section \ref{sec_7}.

\section{Preliminaries} \label{sec_2}
\subsection{Rationale of the GS Algorithm}
The basic principle of the Graph Shift Algorithm is set forth in the work of \cite{liu2010robust}. In the perspective of graph mining, the GS Algorithm aims at searching each vertex's dense "nearer" subgraph with strong internal closeness. Two procedures of Replicator Dynamics and Neighborhood Expansion are recursively employed on each vertex sequentially to reach the goal. The former largely shrinks the identified subgraph, and the later expands the existing subgraph, both shift towards a local graph mode.
%%%%%%%%%%%%% here needs the revising

In \cite{liu2010robust}\cite{pavan2007dominant}, a probabilistic coordinate on Graph $G$ is defined as a mapping: $X:V\to\Delta^n$, where $\Delta^n=\{\boldsymbol{x}\in R^n:\boldsymbol{x}_i\ge0,i\in\{1,\cdots,n\}~\textrm{and}~|\boldsymbol{x}|_1 =1\}$, the support of $\boldsymbol{x}\in\Delta^n$ is the indices of all non-zero components, denoted as $\delta(\boldsymbol{x})=\{i|\boldsymbol{x}_i\neq0\}$, corresponding to a subgraph $G_{\delta(\boldsymbol{x})}$, and $\boldsymbol{x}_i$ denotes node $i$'s attendance in the subgraph $G_{\delta(\boldsymbol{x})}$ to some extent.
%one probabilistic cluster (also defined as subgraph) is defined as: $\boldsymbol{x}=\{\boldsymbol{x}\in R^n~|~\boldsymbol{x}_i\ge0,i\in\{1,\cdots,n\}~\textrm{and}~|\boldsymbol{x}|_1 =1\}$, and $\boldsymbol{x}_i$, $i$-th component of $\boldsymbol{x}$, denotes node $i$'s attendance to the specific subgraph $\boldsymbol{x}$.

The algorithm operates on an affinity matrix $A=(a_{ij})^{n\times n}$, in which $a_{ij}$ measures the similarity between node $i$ and node $j$. Then subgraph $G_{\delta(\boldsymbol{x})}$'s internal similarity is expressed as:
\begin{equation} \label{t_eq1}
g(\boldsymbol{x}):=a(\boldsymbol{x},\boldsymbol{x})=\sum_{i,j=1}^n a_{ij}\boldsymbol{x}_i \boldsymbol{x}_j = \boldsymbol{x}^T A\boldsymbol{x}.
\end{equation}

Accordingly, a local maximum solver of $g(\boldsymbol{x})$ can be taken to represent the desired dense subgraph. The identification of such local maximum regions is equivalent to solving the following quadratic optimization problem:
\begin{equation} \label{t_eq2}
\left\{\begin{array}{lc}
\textrm{maximize} & g(\boldsymbol{x})=\boldsymbol{x}^T A\boldsymbol{x}\\
\textrm{subject to} & \boldsymbol{x}\in\Delta^n
\end{array}\right.
\end{equation}

\subsection{Mapping definition}

%To further explore the properties of graph shift algorithm, we need some notations here to state the problems more formally.
%Some notations are made here to further discuss the properties of the GS Algorithm.
We begin the discussion with a formal definition on the Graph Shift Algorithm and its corresponding mapping. These clear predefined related concepts would facilitate much on the problem statement and understanding.

In general, the GS Algorithm defines a mapping $T_m:\Delta^n\to\Delta^n$ to get the iterative sequence set as:
\begin{equation}
\boldsymbol{x}^{(k)}=T_m(\boldsymbol{x}^{(k-1)})=\cdots=(T_m)^{(k)}(\boldsymbol{x}^{(0)}); k= 1,2,\cdots.
\end{equation}
Where $\boldsymbol{x}^{(0)}$ is an initial starting point, and superscripts in parentheses correspond to the iteration number. Therefore, the problem of this paper is whether or not the iterative sequence set $\{\boldsymbol{x}^{(k)}\}_{k=1}^{\infty}$ generated by $T_m$ converges to a local maximum solver of problem (Equation (\ref{t_eq2})).

Specifying the mapping $T_m$ more clearly will be both necessary and of great help in analyzing this question. Accordingly, $T_m$, the combination of Replicator Dynamics procedure $(B^{m_k})$ and Neighborhood Expansion procedure$(C)$, is broken down as:
\begin{equation} \label{t_eq7}
T_m:=B^{m_k}\circ C.
\end{equation}
 In Equation (\ref{t_eq7}), $B^{m_k}$ represents the $k$-th ($k\le m$) Replicator Dynamics procedure; $m_k$ corresponds to transformation $B$'s number in the $k$-th Replicator Dynamics procedure when a subgraph's mode is reached in this procedure (this result is actually a special case of Theorem 3); $B$ is the transformation expressed as:
 \begin{equation}
 \begin{split} \label{t_eq8}
B:&\Delta^n\to\Delta^n, \boldsymbol{x}(l_k)\to\boldsymbol{x}(l_k+1)\\
=&(\frac{\omega_1(l_k)\boldsymbol{x}_1(l_k)}{\sum_{i=1}^n \omega_{i}(l_k)\boldsymbol{x}_i(l_k)}, \cdots, \frac{\omega_n(l_k)\boldsymbol{x}_n(l_k)}{\sum_{i=1}^n \omega_{i}(l_k)\boldsymbol{x}_i(l_k)}).
\end{split}
\end{equation}
In Equation (\ref{t_eq8}), $\omega_i(l_k)=(A\boldsymbol{x}(l_k))_i=\sum_{j=1}^na_{ij}\boldsymbol{x}_j(l_k)$, %$\boldsymbol{x}(l_k)^TA\boldsymbol{x}(l_k)=\sum_{i,j} \boldsymbol{x}_i(l_k)\boldsymbol{x}_j(l_k)a_{ij}=\sum_{i=1}^n \omega_i(l_k)\boldsymbol{x}_i(l_k)$, and
$i\in\{1, \cdots, n\}$, $l_k\in\{1, \cdots,m_k-1\}$.
%Here $t$ stands for the whole iterations respectively,

$C$ is the Neighborhood Expansion procedure, denoted as:
\begin{equation} \label{t_eq10}
\boldsymbol{x}^{(k+1)} = \boldsymbol{x}^{(k)} + \Delta \boldsymbol{x} = \boldsymbol{x}^{(k  )} + t^*\boldsymbol{b}.
\end{equation}
Details of $t^*$ and $\boldsymbol{b}$ are explained in Appendix A.

%support of $\boldsymbol{x}$.% which is also named as the mode of the graph.
%%%%%%%%%%%%%%% revising here

%In summary, we call algorithms satisfied these three conditions as Graph-Shift liked algorithm.
%\textcolor{red}{Needs to be revised later. These three key components depict the major characteristics of the GS Algorithm, namely the generated sequence set $\{\boldsymbol{x}^{(k)}\}_{k=0}^\infty$, the objective function $g(\boldsymbol{x})=\boldsymbol{x}^TA\boldsymbol{x}$, and the mapping $T_m=B^{m_k}\circ C$. The GS Algorithm can then be described in terms of these three key components.}

\subsection{Detail Procedure and Stopping Criteria}
%In \cite{liu2010robust}, the graph shift algorithm is implemented by recursively using the Replicator Dynamics procedure and Neighborhood Expansion Procedure, with which these two will be shown later, until we find the desired solution.

The GS Algorithm\cite{liu2010robust} is an iterative process through the loop in seeking the dense subgraph starting from each vertex in the graph, with the pseudo-code shown below illustrating the whole process.
%\begin{algorithm}[!ht] \label{eq:algo1}
%\caption{Graph Shift Clustering\cite{liu2010robust}}
\begin{algorithmic}[1]
\REQUIRE $A^{n\times n}$, Affinity matrix of the whole data set with the diagonal value 0;\\
$\{\boldsymbol{x} = \{\boldsymbol{x}_i\}_{i=1^n}\},$ initial starting points, usually taken as $\{\boldsymbol{e} = \{\boldsymbol{e}_i\}_{i=1^n}\}$
\FOR {i=1,...n}
\STATE do Replicator Dynamics(Equation (\ref{t_eq8})) of $\boldsymbol{x}_{i}$
\IF {(result $\boldsymbol{x}$ is the mode of graph)}
    \STATE go to step 11
\ENDIF
\STATE do Neighborhood Expansion(Equation (\ref{t_eq10})) of $\boldsymbol{x}_{i}$
\IF {(result $\boldsymbol{x}$ is the mode of graph)}
    \STATE go to step 11
\ENDIF
\ENDFOR
\RETURN the belonging clusters $\boldsymbol{c}=\{\boldsymbol{c}_i\}_{i=1}^n$ corresponding to the starting points $\boldsymbol{x}=\{\boldsymbol{x}_i\}_{i=1}^n$.
\end{algorithmic}
%\end{algorithm}\\
The stopping criteria of the algorithm , or the solution set of the problem (Equation (\ref{t_eq2})) is set to satisfy the Karush-Kuhn-Tucker (KKT) condition \cite{programmin1951h}:
 %which is converted to the following expression after a series of transformations \cite{liu2010robust}:
 \begin{equation} \label{t_eq3}
\Gamma:=\{\boldsymbol{x}\in\Delta~|~\boldsymbol{x}~\textrm{satisfies} ~(A\boldsymbol{x})_i \left\{ \begin{array}{cc}
=\lambda, & i\in\sigma(\boldsymbol{x});\\
\le\lambda, & i \notin \sigma(\boldsymbol{x}).
\end{array} \right.\}.
 \end{equation}
Here $(A\boldsymbol{x})_i$ is the $i$-th component of $A\boldsymbol{x}$; $\lambda$ is one Lagrange multiplier. $\sigma(\boldsymbol{x})=\{i\in\{1, \cdots, n\}|\boldsymbol{x}_i\neq 0\}$ corresponds to the subgraph as defined above.

As stated above, the GS Algorithm is implemented on each vertex's evolving process by recursively using the Replicator Dynamics procedure and Neighborhood Expansion Procedure, until it reaches the desired solution, i.e., satisfying KKT condition (Equation (\ref{t_eq3})) to each of the starting vertices.

%\textcolor{red}{GS starts with initial points as each vertices, noted as $\{\boldsymbol{e}_i, i=1,\cdots, n\}$, and then repeats estimating the maximum in the subgraph and estimation the shift from one dense subgraph to a nearer dense subgraph until it converges.}

%\textcolor{red}{The number of iterations depends on the dimension of the similarity matrix. Since the Replicator Dynamics has maintained most of the computation in estimating the dense subgraph, the algorithm's computation costs a lot. Complexity estimation and about the computation load.}

\section{Mapping from the GS Algorithm to the Zangwill's Theorem} \label{sec_3}

% Compared to all the convergence proves in the iterated sets scenario,
The Zangwill's convergence theorem \cite{zangwill1969nonlinear}\cite{gunawardana2005convergence} is fundamental in terms of proving the convergence of iterative sets for its general applicability. In this section, we build up the mapping from the GS principle to the Zangwill's convergence theorem.
%%%%%%%%%%%%%%%%%%here needs adding some contents

\subsection{Zangwill's Convergence Theorem}
Definitions and lemmas are introduced before we present the Zangwill's convergence theorem.

\begin{definition}
A point-to-set mapping $\Omega$ from set $X$ to power set $Y$ is defined as $\Omega: X\to P(Y)$, which associates a subset of $Y$ with each point in $X$,  $P(Y)$ denotes the power set of $Y$.
\end{definition}

\begin{definition}
Given a function $f$ and an element $c$ of the domain $I$, $f$ is said to be continuous at the point $c$ if the following holds: for every $\varepsilon>0$, there exists a $\eta > 0$ such that for all $x\in I$, $|x-c|<\eta\Rightarrow|f(x)-f(c)|<\varepsilon$.
\end{definition}

\begin{definition}
A point-to-set mapping $\Omega:X\to P(Y)$ is said to be closed at a point $\boldsymbol{x}^*$ in $X$ if $\{\boldsymbol{x}^{(m)}\}\subset X$ and $\boldsymbol{x}^{(m)}\to\boldsymbol{x}^*, \boldsymbol{y}^{(m)}\in\Omega(\boldsymbol{x}^{(m)})$ and $\boldsymbol{y}^{(m)}\to\boldsymbol{y}^*$ imply that $\boldsymbol{y}^*\in\Omega(\boldsymbol{x}^*)$.
\end{definition}

The following Lemma \ref{eq:lemma1} is induced by integrating a continuous function with a point-to-set mapping:
\begin{lemma} \label{eq:lemma1}
Let $C:M\to V$ be a function and $B:V\to P(V)$ be a point-to-set mapping. Assume $C$ is continuous at $\omega^*$ and $B$ is closed at $C(\omega^*)$, then the point-to-set mapping $A=B\circ C:M\to P(V)$ is closed at $\omega^*$.
\end{lemma}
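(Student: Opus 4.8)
The plan is to verify the definition of closedness (Definition~3) directly for the composite mapping $A=B\circ C$. So I would begin by fixing an arbitrary sequence $\{\omega^{(m)}\}\subset M$ with $\omega^{(m)}\to\omega^*$, together with points $\boldsymbol{y}^{(m)}\in A(\omega^{(m)})=B(C(\omega^{(m)}))$ satisfying $\boldsymbol{y}^{(m)}\to\boldsymbol{y}^*$. The goal is then to show $\boldsymbol{y}^*\in A(\omega^*)=B(C(\omega^*))$, which is exactly what closedness of $A$ at $\omega^*$ requires.

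The natural device is to insert the intermediate single-valued images under $C$. Setting $v^{(m)}:=C(\omega^{(m)})$ and $v^*:=C(\omega^*)$, the containment $\boldsymbol{y}^{(m)}\in B(C(\omega^{(m)}))$ reads $\boldsymbol{y}^{(m)}\in B(v^{(m)})$. First I would establish $v^{(m)}\to v^*$: since $C$ is continuous at $\omega^*$ and $\omega^{(m)}\to\omega^*$, the images $C(\omega^{(m)})$ converge to $C(\omega^*)$. With this in hand I have a sequence $v^{(m)}\to v^*=C(\omega^*)$, points $\boldsymbol{y}^{(m)}\in B(v^{(m)})$, and $\boldsymbol{y}^{(m)}\to\boldsymbol{y}^*$; invoking the hypothesis that $B$ is closed at $C(\omega^*)$ (Definition~3 applied to $B$) yields $\boldsymbol{y}^*\in B(v^*)=B(C(\omega^*))=A(\omega^*)$. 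Since the initial sequence was arbitrary, this proves $A$ is closed at $\omega^*$.

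The only point deserving care --- and the step I expect to be the real (if modest) obstacle --- is the passage used to derive $v^{(m)}\to v^*$. Definition~2 supplies continuity in the $\varepsilon$-$\eta$ form, whereas the closedness of $B$ is phrased sequentially; so I must bridge the two by arguing that $\varepsilon$-$\eta$ continuity implies sequential continuity of $C$ at $\omega^*$. This is standard in a metric setting (which the use of $|\cdot|$ presupposes): given $\varepsilon>0$ choose $\eta>0$ from continuity, then choose $N$ so that $|\omega^{(m)}-\omega^*|<\eta$ for $m\ge N$, whence $|C(\omega^{(m)})-C(\omega^*)|<\varepsilon$ for all such $m$. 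Once this elementary equivalence is recorded, the remainder is a mechanical chaining of the two definitions, with no further estimates needed.
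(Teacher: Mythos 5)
Your argument is correct and is the standard proof of this composition lemma: insert the intermediate points $v^{(m)}=C(\omega^{(m)})$, use sequential continuity of $C$ to get $v^{(m)}\to C(\omega^*)$, then apply the closedness of $B$ at $C(\omega^*)$ to the triple $(v^{(m)},\boldsymbol{y}^{(m)},\boldsymbol{y}^*)$. The paper itself states this lemma without proof (it is imported from Zangwill's convergence theory), so there is no in-paper argument to diverge from; your write-up, including the remark that the $\varepsilon$--$\eta$ formulation of Definition~2 yields the sequential continuity actually needed, fills that gap correctly and with no missing steps.
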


The composition of continuous functions is still a continuous function, we have Lemma \ref{eq:lemma2}:
\begin{lemma} \label{eq:lemma2}
Given two continuous functions: $f:I \to J(\subset\boldsymbol{r}), g:J\to\boldsymbol{R}$, the composition $g\circ f:I\to\boldsymbol{R},x\mapsto g(f(x))$ is continuous.
\end{lemma}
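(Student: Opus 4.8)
The plan is to establish continuity of $g\circ f$ pointwise by directly invoking the $\varepsilon$-$\eta$ characterization of continuity (Definition 2), and then to conclude continuity on all of $I$ since the chosen point will be arbitrary. The core idea is to feed the $\eta$ guaranteed by continuity of $g$ back into the definition of continuity of $f$ as its target tolerance; this chaining is precisely what couples the two separate $\varepsilon$-$\eta$ statements into a single one for the composition.

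Concretely, first I would fix an arbitrary $c\in I$ and an arbitrary $\varepsilon>0$. Since $g$ is continuous at the point $f(c)\in J$, Definition 2 supplies an $\eta_1>0$ such that for every $y\in J$, $|y-f(c)|<\eta_1$ implies $|g(y)-g(f(c))|<\varepsilon$. Next, treating this $\eta_1$ as the target tolerance for $f$, continuity of $f$ at $c$ supplies an $\eta_2>0$ such that for every $x\in I$, $|x-c|<\eta_2$ implies $|f(x)-f(c)|<\eta_1$. Finally I would compose the two implications: for every $x\in I$ with $|x-c|<\eta_2$, the point $f(x)$ lies in $J$ and satisfies $|f(x)-f(c)|<\eta_1$, hence $|g(f(x))-g(f(c))|<\varepsilon$, that is $|(g\circ f)(x)-(g\circ f)(c)|<\varepsilon$. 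Taking $\eta=\eta_2$ thus witnesses continuity of $g\circ f$ at $c$, and since $c\in I$ was arbitrary, $g\circ f$ is continuous on $I$.

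This argument is routine and involves no genuine obstacle; the only point demanding care is the order in which the two tolerances are selected: the $\eta_1$ for $g$ must be chosen before the $\eta_2$ for $f$, because it becomes the error bound that $f$ is required to meet. A secondary detail worth checking is that $f(x)\in J$ for all $x\in I$, which holds by the hypothesis $f:I\to J$, ensuring that $g$ is legitimately evaluated at $f(x)$ so that the continuity statement for $g$ genuinely applies.
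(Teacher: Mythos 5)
Your proof is correct: the $\varepsilon$--$\eta$ chaining argument (choosing the tolerance for $g$ first and feeding it to $f$) is the standard and complete proof of this lemma, and the remark that $f(x)\in J$ justifies evaluating $g$ is the right detail to check. The paper itself states Lemma \ref{eq:lemma2} without proof, treating it as a known fact, so your argument simply supplies the routine verification the paper omits.
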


Accordingly, the Zangwill's convergence theorem is described below.
\begin{theorem}
Given an algorithm on $X$, $\boldsymbol{x}^{(0)}\in X$, assume the sequence $\{\boldsymbol{x}^{(k)}\}_{k=1}^{\infty}$ is generated which satisfies
\begin{equation}
\boldsymbol{x}^{(k+1)}\in \boldsymbol {A}(\boldsymbol{x}^{(k)})
\end{equation}
For a given solution set $\Gamma\subset X$ of an algorithm, if the following three properties holds:
\flushleft
\begin{description}
\item[Compact] The sequence set $\{\boldsymbol{x}^{(k)}\}_{k=0}^{\infty}\subset S$ for $S\subset X$ is a compact set.
\item[Decreasing] There is a continuous function $Z$ on $X$ such that
\begin{enumerate}
\item[1)] if $\boldsymbol{x}\notin\Gamma$, then $Z(\boldsymbol{y})<Z(\boldsymbol{x})$ for all $\boldsymbol{y}\in\boldsymbol{A}(\boldsymbol{x})$.
\item[2)] if $\boldsymbol{x}\in\Gamma$, then $Z(\boldsymbol{y})\leq Z(\boldsymbol{x})$ for all $\boldsymbol{y}\in\boldsymbol{A}(\boldsymbol{x})$.
\end{enumerate}
\item[Closed] The mapping $\boldsymbol{A}$ is closed at all points of $X \backslash \Gamma$.
\end{description}

Then either the algorithm stops at the point where a solution is identified or there exists such a $k$ so that for all $k+j$ ($j \ge 1$) there is a convergent subsequence of $\{\boldsymbol{x}^{(i_k)}\}_{k=0}^{\infty}$ in the solution set $\Gamma$.
\end{theorem}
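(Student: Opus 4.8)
The plan is to argue by contradiction, exploiting compactness to extract convergent subsequences and then using closedness together with the strict decrease of $Z$ to force a contradiction. First I would dispose of the finite case: if the generated sequence is finite, the algorithm halts only when a solution in $\Gamma$ has been identified, which is precisely the first alternative of the conclusion. From now on I therefore assume the algorithm produces an infinite sequence $\{\boldsymbol{x}^{(k)}\}_{k=0}^{\infty}$ and aim to produce a convergent subsequence whose limit lies in $\Gamma$.

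The first main step is to show that the real sequence $\{Z(\boldsymbol{x}^{(k)})\}$ converges. Since $\boldsymbol{x}^{(k+1)}\in\boldsymbol{A}(\boldsymbol{x}^{(k)})$, the Decreasing property in both cases ($\boldsymbol{x}^{(k)}\in\Gamma$ and $\boldsymbol{x}^{(k)}\notin\Gamma$) gives $Z(\boldsymbol{x}^{(k+1)})\le Z(\boldsymbol{x}^{(k)})$, so $\{Z(\boldsymbol{x}^{(k)})\}$ is monotone non-increasing. Because the whole sequence lives in the compact set $S$ and $Z$ is continuous, $Z$ is bounded below on $S$; a bounded monotone sequence converges, say to $Z^*$. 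The consequence I will use repeatedly is that \emph{every} subsequence of $\{Z(\boldsymbol{x}^{(k)})\}$ also converges to the same limit $Z^*$.

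Next I extract the subsequence of iterates. By sequential compactness of $S$ there is a subsequence $\{\boldsymbol{x}^{(i_k)}\}$ converging to some $\boldsymbol{x}^*\in S$, and I claim $\boldsymbol{x}^*\in\Gamma$, which yields the second alternative of the conclusion. Suppose instead $\boldsymbol{x}^*\notin\Gamma$, i.e. $\boldsymbol{x}^*\in X\setminus\Gamma$, which is exactly where closedness of $\boldsymbol{A}$ is available. Applying compactness again to the successor iterates $\boldsymbol{x}^{(i_k+1)}\in\boldsymbol{A}(\boldsymbol{x}^{(i_k)})$, I obtain, after passing to a further subsequence which I relabel, a limit $\boldsymbol{x}^{(i_k+1)}\to\boldsymbol{y}^*\in S$. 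Since $\boldsymbol{x}^{(i_k)}\to\boldsymbol{x}^*$, $\boldsymbol{x}^{(i_k+1)}\in\boldsymbol{A}(\boldsymbol{x}^{(i_k)})$, and $\boldsymbol{x}^{(i_k+1)}\to\boldsymbol{y}^*$, the closedness of $\boldsymbol{A}$ at $\boldsymbol{x}^*$ gives $\boldsymbol{y}^*\in\boldsymbol{A}(\boldsymbol{x}^*)$.

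Finally I derive the contradiction. By continuity of $Z$ we have $Z(\boldsymbol{x}^{(i_k)})\to Z(\boldsymbol{x}^*)$ and $Z(\boldsymbol{x}^{(i_k+1)})\to Z(\boldsymbol{y}^*)$; but both are subsequences of $\{Z(\boldsymbol{x}^{(k)})\}$, so $Z(\boldsymbol{x}^*)=Z^*=Z(\boldsymbol{y}^*)$. On the other hand, $\boldsymbol{x}^*\notin\Gamma$ together with $\boldsymbol{y}^*\in\boldsymbol{A}(\boldsymbol{x}^*)$ forces the strict inequality $Z(\boldsymbol{y}^*)<Z(\boldsymbol{x}^*)$, contradicting $Z(\boldsymbol{y}^*)=Z(\boldsymbol{x}^*)$. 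Hence $\boldsymbol{x}^*\in\Gamma$, as required. The step I expect to be most delicate is the double extraction of subsequences together with the bookkeeping that both $\{Z(\boldsymbol{x}^{(i_k)})\}$ and the relabeled $\{Z(\boldsymbol{x}^{(i_k+1)})\}$ inherit the common limit $Z^*$; this is exactly what converts the closedness and strict-decrease hypotheses into an actual contradiction.
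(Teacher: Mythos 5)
Your proof is correct, but note that the paper itself does not prove this statement at all: it is Zangwill's convergence theorem, imported verbatim from the cited references \cite{zangwill1969nonlinear}\cite{gunawardana2005convergence} and used as a black box to derive Theorems 2 and 3. What you have written is essentially the classical argument (as in Zangwill's book or Luenberger's Global Convergence Theorem): monotonicity of $Z$ along the iterates plus boundedness on the compact set $S$ forces $Z(\boldsymbol{x}^{(k)})\to Z^*$, so every subsequence of the $Z$-values shares that limit; sequential compactness yields a cluster point $\boldsymbol{x}^*$, a second extraction applied to the successors yields $\boldsymbol{y}^*$ with $\boldsymbol{y}^*\in\boldsymbol{A}(\boldsymbol{x}^*)$ by closedness on $X\setminus\Gamma$, and continuity of $Z$ forces $Z(\boldsymbol{y}^*)=Z(\boldsymbol{x}^*)$, contradicting the strict decrease off $\Gamma$. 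Two small remarks: you implicitly use that compactness implies sequential compactness, which is legitimate here since $S\subset\Delta^n\subset\mathbb{R}^n$ is a metric space but is worth stating; and your argument in fact proves the sharper (and standard) conclusion that the limit of \emph{every} convergent subsequence lies in $\Gamma$, which is cleaner than the somewhat garbled phrasing of the conclusion in the paper's statement. Supplying this proof makes the paper's chain of reasoning self-contained, which the paper itself does not do.
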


The Zangwill's convergence theorem provide a feasible direction to verify one algorithm's convergence behavior, especially ones with iterative implementations. With its general flexibility, it has been applied widely to prove the convergence of algorithms with similar properties, including clustering and optimization research. Amongst all the iterative algorithms, here we are interested, in particular, in monotonic algorithms.

\subsection{Mapping}

Retrospective to the conditions in Zangwill's theorem, we further break down the GS Algorithm and abstract the following characteristics from it (detailed verification will be given later):
\begin{description}
\item[1), Simplex of generated sequence set] The candidate solution sequence set $\{\boldsymbol{x}^{(k)}\}_{k=0}^\infty$ generated by the mapping $T_m$ lies in $\Delta^n=\{\boldsymbol{x}\in R^n:\boldsymbol{x}_i\ge0$ and $|\boldsymbol{x}|_1 =1\}$, i.e., the standard $n$-simplex of $R^n$ in any step $k$ ($k\le m$);
\item[2), Monotonic and continuous objective function] The objective function $g(\boldsymbol{x})=\boldsymbol{x}^TA\boldsymbol{x}$ is continuous and strictly increases during the mapping $T_m$ according to the Propositions \ref{t_prop2}-\ref{t_prop4} (see Section 4);
\item[3), Closed mapping] The mapping $T_m=B^{m_k}\circ C$ is closed during each procedure in accordance to Propositions \ref{t_prop5}-\ref{t_prop6} (see Section 4) at all points of the generated sequence set.
\end{description}

%\textcolor{red}{Three formal requirements were defined in the Zangwill's convergence theorem to validate the convergence from this perspective, sharing the similarities with the three GS Algorithm's properties as discussed in Section 2.}
%The Zangwill's convergence theorem defines formal requirements on the generated sequence set $\{\boldsymbol{x}^{(k)}\}_{k=0}^\infty$, the objective function $g(\boldsymbol{x})$ and the mapping $\boldsymbol{A}$ for an algorithm that converges.
%These three properties share similarity with the three GS Algorithm's properties as discussed in Section 2, which also focus on the generated sets, objective function and mapping.

These three properties are also the key components in one algorithm, that is to say, with these vital feature requirements clearly prescribed, the algorithm is fixed into a predefined framework, including the generated sequence set defining the scope of the variables, the objective function's behavior describing the algorithm mapping's efforts towards the setting goal and mapping itself with restricted property.

%\textcolor{red}{needs revising.} Based on the definition of standard $n$-simplex of $R^n$, the GS Algorithm's generated set is bounded. If the set is also closed, then the generated set is compact too. Converting the objective function $\hat{g}(\boldsymbol{x})$ to $\pm g(\boldsymbol{x})$ can adjust the monotonicity of the objective function to be decreasing. Therefore, a perfect match can be built between the GS Algorithm and the Zangwill's theorem.

Table \ref{ttab1}. depicts the one-to-one correspondence similarities between these two more clearly.
\begin{table}[!ht] %\scriptsize \small
\caption{Mapping between GS Algorithm and Zangwill's Theorem}
\label{ttab1}\begin{center}
\begin{tabular}{@{}p{32mm}ccp{64mm}c@{}}\toprule
%\multicolumn{2}{c}{Item} \\ \cmidrule(r){1-3}
the GS Algorithm &   & the Zangwill's theorem        \\ \midrule
${Simplex}$ & $\thicksim$ & ${Compact}$ \\
${Monotonic}$  & $\thicksim$  & ${Decreasing}$\\
${Closed}$ &  $\thicksim$  &  ${Closed}$  \\  \bottomrule
\end{tabular}
\end{center}
\end{table}
%The discussion below can help us make a deeper understanding of this relationship.

\section{Convergence of the GS Algorithm} \label{sec_4}
%Now we present some propositions, all to clarify the relationships between Graph Shift algorithm and Zangwill's Convergence theorem.
Several propositions are declared before analyzing deeply into the convergence behavior of the GS Algorithm, all focus on the three properties as we discussed in Section 2.
%Before we discuss the convergence, we propose several propositions for three GS Algorithm's components: generated sequence set, objective function and mapping.
Detailed proofs of these propositions are given in Appendix B-E.
%For $\bf{Stable}$, we can prove it equals to Zangwill's theorem's $\boldsymbol{Z1}$.
%For $\{\boldsymbol{x}_{(k)}\}_{k=0}^{\infty}$, we can find its lying space is rather explorable.

The GS Algorithm's stable solution set is compact according to Proposition \ref{t_prop1}.
\begin{proposition} \label{t_prop1}
The sequence set $\{\boldsymbol{x}^{(k)}\}_{k=0}^{\infty}\subset S$ generated by the mapping $T_m = B^{m_k}\circ C$ is a compact set.
\end{proposition}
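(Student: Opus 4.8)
The plan is to exhibit a single compact set $S$ that contains the entire generated sequence, namely the standard simplex $S=\Delta^n$ itself, and then invoke the Heine--Borel theorem. Since every iterate is produced by $T_m=B^{m_k}\circ C$ acting on a point of $\Delta^n$, it suffices to establish two facts: first, that $\Delta^n$ is a compact subset of $R^n$; and second, that $\Delta^n$ is invariant under both $B$ and $C$, so that starting from $\boldsymbol{x}^{(0)}\in\Delta^n$ forces $\boldsymbol{x}^{(k)}\in\Delta^n$ for every $k$. Together these give $\{\boldsymbol{x}^{(k)}\}_{k=0}^{\infty}\subset\Delta^n$ with $\Delta^n$ compact, which is exactly the \textbf{Compact} hypothesis required by the Zangwill's theorem.

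First I would verify that $\Delta^n$ is closed and bounded. Boundedness is immediate: every $\boldsymbol{x}\in\Delta^n$ satisfies $0\le\boldsymbol{x}_i\le|\boldsymbol{x}|_1=1$, so $\Delta^n\subset[0,1]^n$. For closedness, observe that $\Delta^n$ is the intersection of the $n$ closed half-spaces $\{\boldsymbol{x}:\boldsymbol{x}_i\ge0\}$ with the closed hyperplane $\{\boldsymbol{x}:\sum_{i=1}^n\boldsymbol{x}_i=1\}$; each of these is the preimage of a closed set under a continuous linear map, hence closed, and a finite intersection of closed sets is closed. By Heine--Borel, a closed and bounded subset of $R^n$ is compact, so $\Delta^n$ is compact.

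Next I would check invariance. For the Replicator Dynamics step $B$, the update in Equation (\ref{t_eq8}) is the normalization of the vector with entries $\omega_i(l_k)\boldsymbol{x}_i(l_k)$, where $\omega_i=(A\boldsymbol{x})_i=\sum_j a_{ij}\boldsymbol{x}_j$. Since the affinity matrix $A$ has nonnegative entries and $\boldsymbol{x}\in\Delta^n$, each $\omega_i\ge0$ and each numerator $\omega_i\boldsymbol{x}_i\ge0$; dividing by $\sum_i\omega_i\boldsymbol{x}_i$ then yields nonnegative components summing to $1$, so $B(\boldsymbol{x})\in\Delta^n$. This argument presumes the denominator $\sum_i\omega_i\boldsymbol{x}_i=g(\boldsymbol{x})$ is strictly positive, which holds on the iterates of interest; the analogous conclusion for the composite $B^{m_k}$ then follows by a straightforward induction on the number of $B$-steps.

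The harder part, and the main obstacle, is invariance under the Neighborhood Expansion step $C$ of Equation (\ref{t_eq10}), $\boldsymbol{x}^{(k+1)}=\boldsymbol{x}^{(k)}+t^*\boldsymbol{b}$. Unlike $B$, this is not a normalization, so one cannot read off membership in $\Delta^n$ directly and must instead use the explicit construction of the direction $\boldsymbol{b}$ and the step length $t^*$ supplied in Appendix A. Concretely, I would show that $\boldsymbol{b}$ satisfies $\sum_i \boldsymbol{b}_i=0$ (equivalently, $\boldsymbol{b}$ is orthogonal to the all-ones vector), so that $|\boldsymbol{x}^{(k+1)}|_1=|\boldsymbol{x}^{(k)}|_1=1$ is preserved, and that $t^*$ is chosen as the largest admissible step keeping every coordinate $\ge0$, so that nonnegativity is not violated; these two properties together give $C(\boldsymbol{x})\in\Delta^n$. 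Once both $B$ and $C$ are seen to map $\Delta^n$ into itself, the composite $T_m=B^{m_k}\circ C$ does as well, which closes the induction over iterations and places the whole sequence in the compact set $\Delta^n$, completing the proof.
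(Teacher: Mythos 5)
Your proposal is correct and follows essentially the same route as the paper's Appendix~B proof: both arguments show that the iterates never leave the simplex $\Delta^n$ (the paper by rewriting $B(\boldsymbol{x})$ as a convex combination of the $\boldsymbol{e}_i$ and noting that adding $\Delta\boldsymbol{x}$ preserves that form, you by checking invariance under $B$ and $C$ separately), and then conclude compactness from closedness and boundedness of the simplex. You are in fact somewhat more careful than the paper on the Neighborhood Expansion step, where the identities $\sum_i\boldsymbol{b}_i=-s+s=0$ and $t^*\le 1/s$ (which keeps every coordinate nonnegative) are exactly the facts the paper leaves implicit in the phrase ``adding $\Delta\boldsymbol{x}$ still holds the expression.''
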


Propositions \ref{t_prop2}-\ref{t_prop4} discuss the monotonicity of $g(\boldsymbol{x})$ under the mapping $T_m = B^{m_k}\circ C$, as discussed about the monotonicity of the GS Algorithm's main characteristics.
\begin{proposition} \label{t_prop2}
The objective function $f(\boldsymbol{x})=\boldsymbol{x}^T\boldsymbol{A}\boldsymbol{x}$ strictly increases along any nonconstant trajectory of Equation (\ref{t_eq8}) when $\boldsymbol{x}\in X/\Gamma$.
\end{proposition}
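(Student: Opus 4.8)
The plan is to recognize the Replicator Dynamics map $B$ of Equation~(\ref{t_eq8}) as the growth transformation attached to the homogeneous quadratic $g(\boldsymbol{x})=\boldsymbol{x}^TA\boldsymbol{x}$, and to establish \emph{strict} monotonicity by a Baum--Eagon-style argument. First I would record the two structural facts about the affinity matrix that make everything work: $A$ is symmetric and has nonnegative entries $a_{ij}\ge 0$. Symmetry gives $\partial g/\partial \boldsymbol{x}_i = 2\sum_j a_{ij}\boldsymbol{x}_j = 2\omega_i$, so that the update $\boldsymbol{x}_i\mapsto \omega_i\boldsymbol{x}_i/\sum_k \omega_k\boldsymbol{x}_k$ is exactly $\boldsymbol{x}_i\mapsto \boldsymbol{x}_i(\partial_i g)/\sum_k \boldsymbol{x}_k(\partial_k g)$; nonnegativity of the $a_{ij}$ is what will later let me read the coefficients of $g$ as an unnormalized probability weighting. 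Euler's identity for the degree-$2$ homogeneous $g$, namely $\sum_k \boldsymbol{x}_k\,\partial_k g = 2g(\boldsymbol{x})$, pins the normalizing denominator to $g(\boldsymbol{x})$ itself.

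The heart of the argument is then a single application of Jensen's inequality. Abbreviating the image $\boldsymbol{y}=B(\boldsymbol{x})$ and forming the weights $q_{ij}=a_{ij}\boldsymbol{x}_i\boldsymbol{x}_j/g(\boldsymbol{x})$ over index pairs (a genuine probability distribution because $a_{ij}\ge0$ and $g(\boldsymbol{x})>0$ off the zero set), concavity of $\log$ yields
\[
\log\frac{g(\boldsymbol{y})}{g(\boldsymbol{x})} = \log\sum_{ij} q_{ij}\,\frac{\boldsymbol{y}_i\boldsymbol{y}_j}{\boldsymbol{x}_i\boldsymbol{x}_j} \ge \sum_{ij} q_{ij}\log\frac{\boldsymbol{y}_i\boldsymbol{y}_j}{\boldsymbol{x}_i\boldsymbol{x}_j}.
\]
Using $\sum_{j}q_{ij}=\boldsymbol{x}_i\omega_i/g(\boldsymbol{x})=\boldsymbol{y}_i$, the right-hand side collapses to $2\sum_i \boldsymbol{y}_i\log(\boldsymbol{y}_i/\boldsymbol{x}_i)$, i.e.\ twice the Kullback--Leibler divergence $\mathrm{KL}(\boldsymbol{y}\,\|\,\boldsymbol{x})$. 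Since this divergence is nonnegative and vanishes only when $\boldsymbol{y}=\boldsymbol{x}$, I obtain $g(\boldsymbol{y})\ge g(\boldsymbol{x})$ with equality precisely at fixed points of $B$. A nonconstant trajectory is by definition one with $\boldsymbol{y}\neq\boldsymbol{x}$ at the step in question, so the divergence is strictly positive and $g$ strictly increases, which is the assertion.

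It remains to tidy up the boundary and the role of $X\setminus\Gamma$. Because $\boldsymbol{y}_i\propto \boldsymbol{x}_i\omega_i$, the support can only stay fixed or shrink, so every index with $\boldsymbol{x}_i=0$ has $\boldsymbol{y}_i=0$ and contributes $0$ under the convention $0\log 0=0$; all the logarithms are thus effectively taken over the common support $\{i:\boldsymbol{x}_i>0\}$ and the divergence formula is unaffected. The qualifier $\boldsymbol{x}\in X\setminus\Gamma$ together with ``nonconstant'' simply guarantees that we are not seated at a fixed point of $B$ (where $\omega_i$ is constant on the support and $g$ is merely preserved), so the strict branch genuinely applies.

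I expect the main obstacle to be twofold, and both of bookkeeping rather than depth: (i) verifying that the lone Jensen step linearizes exactly to $2\,\mathrm{KL}(\boldsymbol{y}\,\|\,\boldsymbol{x})$, which rests on the symmetry of $A$ (to get $\sum_j q_{ij}=\boldsymbol{y}_i$) and on Euler's identity (to identify the denominator); and (ii) making the equality analysis airtight on the boundary of $\Delta^n$, where vanishing components could otherwise invalidate the divergence expression. Once the identification with the growth transformation and the KL identity are in place, strict monotonicity along nonconstant trajectories is immediate; equivalently, one may invoke the Baum--Eagon inequality under the same correspondence if a self-contained derivation is not wanted.
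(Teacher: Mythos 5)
Your proof is correct, but it takes a genuinely different route from the paper's. The paper (Appendix C) proves the statement as a special case of the Baum--Eagon inequality following Baum--Eagon's own strategy: write $g(\boldsymbol{x})=\sum_i\omega_i\boldsymbol{x}_i$, split each term with H\"older's inequality, and then apply the AM--GM inequality to the resulting product, tracking the equality case down to $\omega_p=\omega_q$ for all $p,q$ in the support. You instead pass to logarithms: with $q_{ij}=a_{ij}\boldsymbol{x}_i\boldsymbol{x}_j/g(\boldsymbol{x})$ a probability distribution over pairs (using $a_{ij}\ge 0$), one Jensen step plus the marginal identity $\sum_j q_{ij}=\boldsymbol{y}_i$ (which uses symmetry of $A$ and Euler's identity for the denominator) gives $\log\bigl(g(\boldsymbol{y})/g(\boldsymbol{x})\bigr)\ge 2\,\mathrm{KL}(\boldsymbol{y}\,\|\,\boldsymbol{x})$. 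This buys you two things the paper's argument does not deliver as cleanly: a quantitative lower bound on the per-step improvement, and an equality analysis that reduces to the standard Gibbs condition $\boldsymbol{y}=\boldsymbol{x}$ rather than chasing equality cases through two separate classical inequalities (the paper's version of that chase is in fact somewhat garbled, with an undefined $J(\cdot)$ and stray exponents $\tfrac13,\tfrac23$). Your handling of the boundary (support can only shrink, so $0\log 0=0$ covers vanishing components) and of the distinction between fixed points of $B$ and membership in $\Gamma$ is also correct and matches the intended reading of ``nonconstant trajectory.'' The only caveat worth stating explicitly is that both your argument and the paper's implicitly require $g(\boldsymbol{x})>0$ for the update and the weights $q_{ij}$ to be well defined; since the diagonal of $A$ is zero, this fails at the unit-vector starting points $\boldsymbol{e}_i$, but that is a defect of the algorithm's formulation shared by the paper's own proof, not a gap particular to yours.
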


\begin{proposition} \label{t_prop3}
The objective function $f(\boldsymbol{x})=\boldsymbol{x}^T\boldsymbol{A}\boldsymbol{x}$ strictly increases along the neighborhood expansion operation of Equation (\ref{t_eq10}).
\end{proposition}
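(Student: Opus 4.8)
The plan is to reduce the claim to a one-dimensional analysis of $f$ restricted to the expansion ray. Since the affinity matrix $A$ is symmetric, substituting $\boldsymbol{x}(t)=\boldsymbol{x}^{(k)}+t\boldsymbol{b}$ into $f$ yields a scalar quadratic
\begin{equation}
\phi(t):=f(\boldsymbol{x}^{(k)}+t\boldsymbol{b})=f(\boldsymbol{x}^{(k)})+2t\,\boldsymbol{b}^{T}A\boldsymbol{x}^{(k)}+t^{2}\,\boldsymbol{b}^{T}A\boldsymbol{b},
\end{equation}
so that $\phi(0)=f(\boldsymbol{x}^{(k)})$ and $\phi(t^{*})=f(\boldsymbol{x}^{(k+1)})$. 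The whole proposition then amounts to showing $\phi(t^{*})>\phi(0)$, which I would obtain by first establishing that $\boldsymbol{b}$ is a strict ascent direction, i.e. $\phi'(0)=2\,\boldsymbol{b}^{T}A\boldsymbol{x}^{(k)}>0$.

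To see that $\boldsymbol{b}$ is an ascent direction, I would use the fact that the expansion step is invoked only when $\boldsymbol{x}^{(k)}$ is not yet a graph mode. By Proposition~\ref{t_prop2}, the preceding Replicator Dynamics has driven $\boldsymbol{x}^{(k)}$ to a point satisfying $(A\boldsymbol{x}^{(k)})_i=\lambda$ for every $i\in\sigma(\boldsymbol{x}^{(k)})$; since the KKT condition (Equation (\ref{t_eq3})) fails, there is at least one index $i\notin\sigma(\boldsymbol{x}^{(k)})$ with $(A\boldsymbol{x}^{(k)})_i>\lambda$. The direction $\boldsymbol{b}$ defined in Appendix A transfers probability mass onto exactly such violating coordinates while removing it from the current support, and is constructed to keep $\boldsymbol{x}^{(k)}+t\boldsymbol{b}$ in $\Delta^n$, so that $\sum_i b_i=0$. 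Writing $\boldsymbol{b}^{T}A\boldsymbol{x}^{(k)}=\sum_i b_i\big((A\boldsymbol{x}^{(k)})_i-\lambda\big)$ and noting that the negative entries of $\boldsymbol{b}$ multiply the vanishing terms $(A\boldsymbol{x}^{(k)})_j-\lambda=0$ on the support while the positive entries multiply strictly positive terms off the support, I would conclude $\boldsymbol{b}^{T}A\boldsymbol{x}^{(k)}>0$, and therefore $\phi'(0)>0$.

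Finally, because $t^{*}$ is selected (as specified in Appendix A) as the feasible maximizer of $\phi$ over the admissible step interval $[0,t_{\max}]$, and $\phi'(0)>0$ forces this maximizer to be strictly positive, I would deduce $\phi(t^{*})\ge\phi(t)>\phi(0)$ for any sufficiently small $t>0$, giving $f(\boldsymbol{x}^{(k+1)})>f(\boldsymbol{x}^{(k)})$ as required. The main obstacle I anticipate lies in the middle paragraph: pinning down precisely how $\boldsymbol{b}$ and $t^{*}$ from Appendix A relate to the KKT violation, and verifying that the chosen $\boldsymbol{b}$ indeed routes mass only toward coordinates with $(A\boldsymbol{x}^{(k)})_i>\lambda$. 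A secondary technical point is confirming feasibility of the whole segment $\{\boldsymbol{x}^{(k)}+t\boldsymbol{b}:t\in[0,t^{*}]\}\subset\Delta^n$, so that $\phi$ is the genuine restriction of $f$ and the line search ranges over admissible iterates only.
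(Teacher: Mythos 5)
Your proposal is correct and follows essentially the same route as the paper, whose own proof simply defers to the appendix where $g(\boldsymbol{x}+t\boldsymbol{b})-g(\boldsymbol{x})$ is written as the same restricted quadratic $-(\lambda s^2+2s\zeta-\omega)t^2+2\zeta t$ and the sign is settled by the case analysis on $t^*$; your $\phi'(0)=2\boldsymbol{b}^{T}A\boldsymbol{x}^{(k)}$ is exactly the paper's linear coefficient $2\zeta=2\sum_{i\notin\sigma(\boldsymbol{x})}v_i^2$. If anything, your KKT-violation argument for $\phi'(0)>0$ is more careful than the paper's, which in the case $\lambda s^2+2s\zeta-\omega\le 0$ only asserts the non-strict inequality $g(\boldsymbol{x}+\Delta\boldsymbol{x})-g(\boldsymbol{x})\ge 0$ and leaves the strictness (i.e., that $\zeta>0$ whenever $\boldsymbol{x}^{(k)}\notin\Gamma$) implicit.
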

\begin{proof}
It can be derived from the definition of $\Delta\boldsymbol{x}$ in Appendix B.
\end{proof}

\begin{proposition} \label{t_prop4}
$g(x)=\boldsymbol{x}^TA\boldsymbol{x}$ is a function both continuous and strictly increasing during the mapping $T_m=B^{m_k}\circ C$ when $\boldsymbol{x}\in X/\Gamma$, but just increasing if $\boldsymbol{x}\in \Gamma$.
\end{proposition}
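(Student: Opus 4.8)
The plan is to separate the two assertions of the proposition---continuity and the monotonicity of $g$ under the composite map---and then recombine them, drawing the strict-increase facts for the constituent transformations from Propositions~\ref{t_prop2} and~\ref{t_prop3}. For continuity, I would observe that $g(\boldsymbol{x})=\boldsymbol{x}^T A\boldsymbol{x}=\sum_{i,j=1}^n a_{ij}\boldsymbol{x}_i\boldsymbol{x}_j$ (Equation~(\ref{t_eq1})) is a quadratic polynomial in the coordinates of $\boldsymbol{x}$, hence continuous on all of $R^n$ and in particular on $\Delta^n$; this needs no further argument beyond citing the polynomial form.

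For monotonicity along $T_m=B^{m_k}\circ C$, I would write $g(T_m(\boldsymbol{x}))=g\bigl(B^{m_k}(C(\boldsymbol{x}))\bigr)$ and chain two inequalities. Proposition~\ref{t_prop3} gives $g(C(\boldsymbol{x}))\ge g(\boldsymbol{x})$, strict whenever the expansion is nontrivial ($t^*>0$); Proposition~\ref{t_prop2} gives $g\bigl(B^{m_k}(C(\boldsymbol{x}))\bigr)\ge g(C(\boldsymbol{x}))$, strict whenever the replicator trajectory is nonconstant. Composing yields $g(T_m(\boldsymbol{x}))\ge g(C(\boldsymbol{x}))\ge g(\boldsymbol{x})$.

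It then remains to resolve the two cases against the KKT solution set $\Gamma$ of Equation~(\ref{t_eq3}). If $\boldsymbol{x}\in\Gamma$, I would argue that $\boldsymbol{x}$ is already a mode, so the replicator trajectory is constant and the expansion step size is $t^*=0$; both chained inequalities then collapse to equalities and $g(T_m(\boldsymbol{x}))=g(\boldsymbol{x})$, which is the non-strict (\emph{just increasing}) case. If instead $\boldsymbol{x}\notin\Gamma$, then $\boldsymbol{x}$ violates the KKT condition, which forces at least one step to be active---either the replicator trajectory is nonconstant or the expansion is nontrivial---so at least one inequality in the chain is strict and $g(T_m(\boldsymbol{x}))>g(\boldsymbol{x})$.

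I expect the main obstacle to be this last step: precisely linking membership in $\Gamma$ (the KKT set of the full program, Equation~(\ref{t_eq2})) to the fixed-point structure of the composite map. One must verify that $\boldsymbol{x}\notin\Gamma$ truly guarantees a strict increase somewhere, since being a fixed point of $B$ alone only certifies that the current subgraph is a local mode and not that no expanding direction improves $g$; the neighborhood-expansion clause of $\Gamma$ must be invoked to close exactly that gap. The continuity and the two monotonicity inequalities are otherwise routine given the earlier propositions.
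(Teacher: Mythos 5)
Your proposal is correct and follows essentially the same route as the paper, whose entire argument is the observation that $g$ is a polynomial (hence continuous) plus the one-line chain $g(\boldsymbol{x})\le g(C(\boldsymbol{x}))<g(B\circ C(\boldsymbol{x}))=g(T_m(\boldsymbol{x}))$ drawn from Propositions 2 and 3. Your closing case analysis---checking that $\boldsymbol{x}\notin\Gamma$ actually forces strictness in at least one link of the chain, and that $\boldsymbol{x}\in\Gamma$ collapses both to equalities---is more careful than the paper's proof, which simply asserts the strict inequality at the $B$ step without addressing that gap.
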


%We can also find the mapping $T_m$'s evaluation is closely related to the value of $\boldsymbol{x}$.
Propositions \ref{t_prop5}-\ref{t_prop6} validate the closed mapping property of the GS Algorithm.
\begin{proposition} \label{t_prop5}
The mapping $C$ is closed on all points of $X\backslash \Gamma$.
\end{proposition}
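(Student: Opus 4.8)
The plan is to reduce the closedness of the point-to-set mapping $C$ to the continuity of an underlying point-to-point function, and then to invoke the fact—immediate from Definition 3—that a continuous single-valued map, regarded as a point-to-set mapping with singleton images, is closed. Writing out Equation (\ref{t_eq10}), for $\boldsymbol{x}\in X\backslash\Gamma$ the neighborhood expansion returns $C(\boldsymbol{x})=\boldsymbol{x}+t^*\boldsymbol{b}$, where the expansion direction $\boldsymbol{b}$ is fixed by the current support and the reward vector $A\boldsymbol{x}$, while the step $t^*$ is the maximizer of the scalar map $t\mapsto g(\boldsymbol{x}+t\boldsymbol{b})$ over the feasible interval keeping $\boldsymbol{x}+t\boldsymbol{b}\in\Delta^n$ (see Appendix A). First I would show that $\boldsymbol{b}=\boldsymbol{b}(\boldsymbol{x})$ depends continuously on $\boldsymbol{x}$ on $X\backslash\Gamma$: the entries of $A\boldsymbol{x}$ are linear, hence continuous, in $\boldsymbol{x}$, and the vertex selected for expansion is determined by a strict comparison of these entries, so the selection is locally constant wherever the maximizing entry is unique.

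Next I would show that $t^*=t^*(\boldsymbol{x})$ is continuous. Because $g$ is the quadratic $\boldsymbol{x}^TA\boldsymbol{x}$, the objective $g(\boldsymbol{x}+t\boldsymbol{b})$ is a quadratic polynomial in $t$ whose coefficients are continuous functions of $\boldsymbol{x}$ and $\boldsymbol{b}$, so its constrained maximizer admits a closed-form expression that is continuous in those coefficients. Combining this with the continuity of $\boldsymbol{b}(\boldsymbol{x})$ and applying Lemma \ref{eq:lemma2} (composition of continuous maps is continuous) yields that $C(\boldsymbol{x})=\boldsymbol{x}+t^*(\boldsymbol{x})\boldsymbol{b}(\boldsymbol{x})$ is continuous on $X\backslash\Gamma$. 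With continuity in hand, the closedness test of Definition 3 is routine: given $\boldsymbol{x}^{(m)}\to\boldsymbol{x}^*\in X\backslash\Gamma$, $\boldsymbol{y}^{(m)}=C(\boldsymbol{x}^{(m)})$ and $\boldsymbol{y}^{(m)}\to\boldsymbol{y}^*$, continuity forces $C(\boldsymbol{x}^{(m)})\to C(\boldsymbol{x}^*)$, and uniqueness of limits gives $\boldsymbol{y}^*=C(\boldsymbol{x}^*)\in C(\boldsymbol{x}^*)$, which is exactly closedness at $\boldsymbol{x}^*$.

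The main obstacle is the continuity of the expansion direction $\boldsymbol{b}$, because selecting the expanding vertex is a combinatorial argmax over the components of $A\boldsymbol{x}$ and is liable to jump at points where that argmax is not unique. The crux of the argument is therefore to rule out ties on $X\backslash\Gamma$: I would use the fact that $\boldsymbol{x}^*\notin\Gamma$ means the KKT conditions of Equation (\ref{t_eq3}) fail, so there is a strict gap between the reward of the best expanding vertex and the current value, and this gap persists in a neighborhood, keeping the selection—and hence $\boldsymbol{b}$—locally constant. Should ties genuinely occur, the fallback is to treat $C$ honestly as a point-to-set mapping and to establish closedness through the closed graph of the argmax correspondence, using compactness of the feasible step interval together with the continuity of $g$; this is the only place where the singleton-image simplification might have to be abandoned.
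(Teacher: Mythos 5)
Your overall strategy is the same as the paper's: reduce closedness of $C$ to continuity of the single-valued map $\boldsymbol{x}\mapsto\boldsymbol{x}+t^*\boldsymbol{b}$, and then observe that a continuous point-to-point map is closed. The paper's Appendix E is exactly the $\varepsilon$--$\delta$ unwinding of that last observation (a triangle inequality bounding $|\boldsymbol{y}^0-C(\boldsymbol{x}^0)|$ by $2\varepsilon+|\Delta\boldsymbol{x}^0-\Delta\boldsymbol{x}^n|$), with the continuity of $\Delta\boldsymbol{x}$ asserted in one line ``according to Lemma \ref{eq:lemma2} and the definitions of $\boldsymbol{b}$ and $t$.'' You are more candid than the paper in flagging that this continuity is the real content of the proposition; the paper also silently adds the case $\boldsymbol{x}^0\in\Gamma$ (where it sets $\Delta\boldsymbol{x}^0=0$), which is not needed for the statement as given since it only concerns $X\backslash\Gamma$.

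There is, however, a concrete mismatch between the obstruction you identify and the one actually present in the paper's construction. Appendix A defines $\boldsymbol{b}$ componentwise through $v_i=\max(a(\boldsymbol{x},I_i)-g(\boldsymbol{x}),0)$ for $i\notin\sigma(\boldsymbol{x})$ and $\boldsymbol{b}_i=-\boldsymbol{x}_i s$ for $i\in\sigma(\boldsymbol{x})$: there is no single selected vertex and no argmax, so your crux --- ruling out ties in an argmax over the components of $A\boldsymbol{x}$ via failure of the KKT conditions --- targets a discontinuity that does not occur. The discontinuities that can occur come from (i) the support map $\sigma(\boldsymbol{x})$, which is only lower semicontinuous (along $\boldsymbol{x}^{(m)}\to\boldsymbol{x}^*$ some coordinates may be positive for all $m$ yet vanish in the limit, so $\sigma(\boldsymbol{x}^{(m)})\supsetneq\sigma(\boldsymbol{x}^*)$), which changes which branch of the definition of $\boldsymbol{b}$, $s$, $\zeta$, $\omega$ each coordinate falls into; and (ii) the case split in the closed-form $t^*$ at $\lambda s^2+2s\zeta-\omega=0$ and at $s=0$. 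Your proposed argument would therefore stall at the step ``the selection is locally constant,'' because the quantity that jumps is the support, not an argmax, and the failure of the KKT conditions does not prevent coordinates of $\boldsymbol{x}^{(m)}$ from tending to zero. Your fallback (treating $C$ as a genuine correspondence and proving closedness of its graph directly) is the right instinct and is in fact what a rigorous version of the paper's own proof would also require; neither your primary argument nor the paper's one-line citation of Lemma \ref{eq:lemma2} actually discharges this point.
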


\begin{proposition} \label{t_prop6}
The mapping $T_m=B^{m_k}\circ C$ is closed on $X/\Gamma$.
\end{proposition}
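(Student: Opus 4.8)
The plan is to invoke the closedness-composition lemma (Lemma \ref{eq:lemma1}), which guarantees that if the inner map is continuous at a point and the outer map is closed at its image, then their composition is closed at that point. Since $T_m=B^{m_k}\circ C$, I would take the Neighborhood Expansion $C$ as the inner map and the $m_k$-fold Replicator Dynamics $B^{m_k}$ as the outer map, and verify the two hypotheses of Lemma \ref{eq:lemma1} at each point $\omega^*\in X\backslash\Gamma$. Note that the notation matches exactly: Lemma \ref{eq:lemma1} asks for a continuous function $C$ and a closed point-to-set map $B$, which is precisely the role of the two factors of $T_m$.

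First I would establish continuity of the inner map $C$ at $\omega^*$. Because $C$ is the single-valued map $\boldsymbol{x}\mapsto\boldsymbol{x}+t^*\boldsymbol{b}$ of Equation (\ref{t_eq10}), and $t^*$ and $\boldsymbol{b}$ depend continuously on $\boldsymbol{x}$ by their construction in Appendix A, $C$ is continuous; this is the same property underlying the closedness of $C$ in Proposition \ref{t_prop5}, since a continuous single-valued map is automatically closed as a point-to-set mapping. Next I would show the outer map $B^{m_k}$ is closed at $C(\omega^*)$. The single transformation $B$ of Equation (\ref{t_eq8}) is continuous on $X\backslash\Gamma$, because its common denominator $\sum_{i}\omega_i(l_k)\boldsymbol{x}_i(l_k)=\boldsymbol{x}^TA\boldsymbol{x}=g(\boldsymbol{x})$ stays strictly positive away from the degenerate solution set $\Gamma$. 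By Lemma \ref{eq:lemma2} a finite composition of continuous maps is continuous, hence closed, so $B^{m_k}$ is closed at $C(\omega^*)$. Applying Lemma \ref{eq:lemma1} then gives that $T_m$ is closed at $\omega^*$, and ranging over all $\omega^*\in X\backslash\Gamma$ yields the claim.

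The \textbf{main obstacle} lies in the exponent $m_k$, which is not a fixed integer but a point-dependent count---the number of Replicator steps taken before a subgraph mode is reached---so $B^{m_k}$ is really an ``iterate-until-mode'' map rather than a literal composition of constant length. The delicate step is to argue that this variable count does not destroy closedness. I would first try to show that on $X\backslash\Gamma$ the count $m_k$ is locally constant, so that in a neighborhood of $C(\omega^*)$ the map $B^{m_k}$ coincides with a genuine fixed-length composition of the continuous map $B$, validating the argument above. Should local constancy fail, I would fall back on a direct sequence argument: given $\boldsymbol{x}^{(j)}\to\boldsymbol{x}^*$ in $X\backslash\Gamma$ with $\boldsymbol{y}^{(j)}=B^{m_k}(C(\boldsymbol{x}^{(j)}))\to\boldsymbol{y}^*$, each $\boldsymbol{y}^{(j)}$ is a fixed point of the continuous $B$, so passing to the limit gives $B(\boldsymbol{y}^*)=\boldsymbol{y}^*$; combining this with the strict monotonicity of $g$ under $B$ (Proposition \ref{t_prop2}) and compactness (Proposition \ref{t_prop1}), I would identify $\boldsymbol{y}^*$ as the mode reached from $C(\boldsymbol{x}^*)$, i.e. $\boldsymbol{y}^*\in T_m(\boldsymbol{x}^*)$, which is exactly closedness.
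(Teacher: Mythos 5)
Your proof is correct in outline and uses the same decomposition and the same key lemma as the paper, but you distribute the hypotheses of Lemma \ref{eq:lemma1} over the two factors differently --- and, as it happens, more faithfully. The paper's own proof reads: $B$ is continuous on $X\backslash\Gamma$, $C$ is closed there (Proposition \ref{t_prop5}), hence by Lemma \ref{eq:lemma1} the composition is closed. But Lemma \ref{eq:lemma1} as stated requires the \emph{inner} map to be continuous and the \emph{outer} map to be closed; in $T_m=B^{m_k}\circ C$ the inner map is $C$ and the outer map is $B^{m_k}$, so the paper invokes the lemma with the roles reversed (a different composition lemma --- closed inner map, continuous outer map, plus a compactness hypothesis --- would be needed to make that version go through). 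Your assignment --- $C$ continuous as a single-valued map (the very fact the paper itself uses inside the proof of Proposition \ref{t_prop5}), and $B^{m_k}$ closed because a finite composition of continuous maps is continuous (Lemma \ref{eq:lemma2}) and a continuous point-to-point map is closed as a point-to-set map (Proposition \ref{prop7}) --- is the one that matches Lemma \ref{eq:lemma1} literally. You also name the one genuine gap that the paper passes over in silence: $m_k$ is a point-dependent stopping count, so $B^{m_k}$ is an ``iterate-until-mode'' map rather than a fixed-length composition, and neither the paper's one-line proof nor your local-constancy heuristic actually establishes its closedness; your fallback sequence argument is the right instinct, though it would still need to rule out $m_k\to\infty$ along the approximating sequence. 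Net: same skeleton as the paper, but your version repairs the hypothesis-matching and at least acknowledges the $m_k$ issue that both proofs ultimately leave open.
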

\begin{proof}
According to the definition of $B$ (Equation (\ref{t_eq8})), it is continuous on $X/\Gamma$. C is closed on $X/\Gamma$ as per Proposition \ref{t_prop5}. According to Lemma \ref{eq:lemma1}, $T_m$ is closed on $X/\Gamma$.
\end{proof}

With the above preparations, we have the following Theorem 2.
\begin{theorem}
Let $A = (a_{ij})^{(n\times n)}$ be a similarity matrix with diagonal values 0, $T_m$, $\Gamma$ be defined as Equation $(\ref{t_eq7})$, Equation $(\ref{t_eq3})$, and $\boldsymbol{x}^{(0)}$ be an arbitrary initial starting point, then either the iteration sequence $\{\boldsymbol{x}^{(r)}\}$ ($r = 1, 2, \ldots$)
terminates at a point $\boldsymbol{x}^*$ in the solution set $\Gamma$ or there is a subsequence converging to a point in $\Gamma$.
%The sequence set $\{\boldsymbol{x}^{(k)}\}_{k=0}^\infty$ generated by the mapping $T_m=B\circ C$ will be convergent.
\end{theorem}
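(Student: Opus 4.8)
The plan is to apply the Zangwill convergence theorem directly, taking the point-to-set mapping $\boldsymbol{A} = T_m$ on the domain $X = \Delta^n$ and the solution set $\Gamma$ from Equation (\ref{t_eq3}). The whole argument then reduces to verifying the three hypotheses---\textbf{Compact}, \textbf{Decreasing}, and \textbf{Closed}---each of which has already been prepared by the propositions of this section. The only genuine conceptual step is to reconcile the fact that the GS Algorithm \emph{maximizes} $g(\boldsymbol{x}) = \boldsymbol{x}^T A \boldsymbol{x}$, whereas the Zangwill framework is phrased in terms of a strictly \emph{decreasing} merit function.

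First I would dispose of the sign convention by setting $Z(\boldsymbol{x}) := -g(\boldsymbol{x}) = -\boldsymbol{x}^T A \boldsymbol{x}$. By Proposition \ref{t_prop4}, $g$ is continuous and strictly increasing under $T_m$ whenever $\boldsymbol{x} \in X \setminus \Gamma$, and merely non-decreasing when $\boldsymbol{x} \in \Gamma$; hence $Z$ is continuous and satisfies $Z(\boldsymbol{y}) < Z(\boldsymbol{x})$ for all $\boldsymbol{y} \in \boldsymbol{A}(\boldsymbol{x})$ when $\boldsymbol{x} \notin \Gamma$, and $Z(\boldsymbol{y}) \le Z(\boldsymbol{x})$ when $\boldsymbol{x} \in \Gamma$. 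This is exactly the \textbf{Decreasing} condition.

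Next, the \textbf{Compact} hypothesis follows immediately from Proposition \ref{t_prop1}: the generated sequence $\{\boldsymbol{x}^{(k)}\}_{k=0}^\infty$ is contained in the compact set $S \subset \Delta^n$ (indeed $\Delta^n$ is itself closed and bounded in $R^n$). The \textbf{Closed} hypothesis is supplied by Proposition \ref{t_prop6}, which establishes that $T_m = B^{m_k} \circ C$ is closed at every point of $X \setminus \Gamma$; this in turn rests on the closedness of $C$ (Proposition \ref{t_prop5}), the continuity of $B$, and Lemma \ref{eq:lemma1}. With all three conditions in place, the Zangwill convergence theorem yields the desired dichotomy: either the iteration halts at some $\boldsymbol{x}^* \in \Gamma$, or the sequence admits a subsequence converging to a point of $\Gamma$, which is precisely the assertion to be proven.

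I expect the main obstacle to lie not in the final assembly but in confirming that the propositions interlock cleanly with the exact Zangwill hypotheses---in particular, that the strict monotonicity of $g$ holds along \emph{every} selection $\boldsymbol{y} \in T_m(\boldsymbol{x})$ for $\boldsymbol{x} \notin \Gamma$, so that the strict inequality in the Decreasing condition is never violated, and that the solution set $\Gamma$ defined by the KKT condition coincides with the set of points at which the strict decrease of $Z$ fails. Any gap here would stem from the composite nature of $T_m$, where a single outer iteration bundles $m_k$ replicator steps with one expansion step; the hard part will be to verify that the monotonicity and closedness established step-by-step in Propositions \ref{t_prop2}--\ref{t_prop6} propagate through this composition without loss.
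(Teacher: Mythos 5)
Your proposal is correct and follows essentially the same route as the paper's own proof: negate $g$ to obtain the Zangwill merit function $Z$, invoke Proposition \ref{t_prop1} for compactness, Proposition \ref{t_prop4} for the decreasing condition, and Proposition \ref{t_prop6} for closedness of $T_m$ on $X\backslash\Gamma$, then apply Theorem 1. The additional caveats you raise about the composite structure of $T_m$ are reasonable concerns, but the paper resolves them exactly as you anticipate, via Propositions \ref{t_prop2}--\ref{t_prop6}.
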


\begin{proof}
Taking $\hat{g}(\boldsymbol{x})=-g(\boldsymbol{x})=-\boldsymbol{x}^TA\boldsymbol{x}$ as the continuous function $Z$ and $T_m$ as the algorithm mapping $A$ in Theorem 1. Proposition 1 shows that the sequence set $\{\boldsymbol{x}^{(k)}\}_{k=1}^{\infty}$ generated by $T_m$ is a compact set. $\hat{g}(\boldsymbol{x})$ is continuous and strict decreasing as the continuity and strict increasing characteristic of $g(\boldsymbol{x})$ in the trajectory of $T_m$ are proven by Proposition \ref{t_prop4}. Proposition \ref{t_prop6} asserts $T_m$ is closed on $x/\Gamma$ ($\Gamma$ is the solution set defined in Equation (\ref{t_eq3})). According to the Zangwill's convergence theory, Theorem 2 holds as all three properties are satisfied.
\end{proof}

This result gives us the theoretical guarantees to the various applications of the GS Algorithm and ensure to reach at least a local maximum of the objective function after finite number of algorithm's mapping implementations.

\section{Convergence for Other GS-type Algorithms} \label{sec_5}
%We call algorithms satisfied three key conditions stable, monotonic, and closed as the GS-typed algorithm.
We expand the proposed GS convergence theorem's proof to other GS-type algorithms. Take the Dominant Sets and Pairwise Clustering (DSPC) Algorithm \cite{pavan2007dominant} as an example, for it is the origin method of the GS Algorithm. The DSPC Algorithm shares the same goal as the GS Algorithm in terms of finding dense subgraphs, i.e., dominant sets $\sigma(\boldsymbol{x})$. Their implementations are mostly similar, however differentiates in whether selecting the neighborhood expansion or not, the GS Algorithm does but the DSPC Algorithm does not.

%  Along with these two algorithms' similarities, properties on the key aspects, including generated sets, objective function and mapping are both possessed on them.
The detail implementation of the DSPC Algorithm could refer to \cite{pavan2007dominant}, due to the simiplicy, we ignore it here and focus on its convergence behavior. The DSPC Algorithm consists of three key components, holding similar properties to the GS Algorithm, however, differing in minor places.
\begin{description}
\item[1), Simplex of generated sequence set] The sequence set $\{\boldsymbol{x}^{(k)}\}_{k=0}^\infty$ generated by mapping $B$ always lies in $\Delta^n$;
\item[2), Monotonic and continuous objective function] The objective function $g(\boldsymbol{x})=\boldsymbol{x}^TA\boldsymbol{x}$ is continuous and  strictly increasing during the mapping $B$;
\item[3), Continuous mapping] The mapping $B$ is continuous.
\end{description}
Table \ref{ttab2} further displays the relationship between the DSPC Algorithm, the GS Algorithm and the Zangwill's theorem.
\begin{table}[htbp] %\scriptsize \small
\caption{\small{GS-type algorithms and Zangwill's theorem's mapping}}
\label{ttab2}
\begin{center}
\begin{tabular}{@{}ccccc@{}}\toprule
DSPC Algorithm & & GS Algorithm &   & Zangwill's theorem        \\ \midrule
$Simplex$ & $\thicksim$& $ Simplex$ & $\thicksim$ & $Compact$     \\
${Monotonic}$ & $\thicksim$& $ {Monotonic}$  & $\thicksim$  & $Decreasing$   \\
${Continuous}$ & $\thicksim$& $ {Closed}$ &  $\thicksim$  &  $Closed$  \\  \bottomrule
\end{tabular}
\end{center}
\end{table}

A convergence proof has been provided for the continuous version of the DSPC Algorithm in \cite{weibull1997evolutionary}. Here some related propositions are first introduced. Then we discuss the convergence property of its discrete-time version by involving the Zangwill's convergence theorem.

\begin{proposition} \label{t_prop8}
The sequence set $\{\boldsymbol{x}^{(k)}\}_{k=0}^{\infty}\subset S$ generated by the mapping $B^{m_k}$ is a compact set.
\end{proposition}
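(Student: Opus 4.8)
The plan is to exhibit an explicit compact set $S$ that contains the whole generated orbit, the obvious candidate being the standard simplex $\Delta^n$ itself. The argument is the DSPC counterpart of Proposition \ref{t_prop1}, and is in fact lighter here because the relevant mapping is the pure Replicator Dynamics $B^{m_k}$ with no Neighborhood Expansion step. The task therefore splits into two independent facts: invariance of $\Delta^n$ under $B$, and compactness of $\Delta^n$.

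First I would check the invariance $B(\Delta^n)\subseteq\Delta^n$ straight from the defining formula, Equation (\ref{t_eq8}). For $\boldsymbol{x}\in\Delta^n$ each weight $\omega_i=(A\boldsymbol{x})_i=\sum_j a_{ij}\boldsymbol{x}_j$ is nonnegative, since $A$ has nonnegative entries and $\boldsymbol{x}_i\ge0$. Hence every coordinate of $B(\boldsymbol{x})$, namely $\omega_i\boldsymbol{x}_i$ divided by the common normalizer $\sum_i\omega_i\boldsymbol{x}_i=\boldsymbol{x}^TA\boldsymbol{x}$, is nonnegative, and the coordinates sum to $1$ by construction of the normalization. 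Thus $B(\boldsymbol{x})\in\Delta^n$, and by induction the entire sequence $\{\boldsymbol{x}^{(k)}\}_{k=0}^\infty$ produced by iterating $B^{m_k}$ remains inside $\Delta^n$.

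Next I would argue that $\Delta^n$ is compact via the Heine--Borel theorem, i.e. by showing it is closed and bounded in $R^n$. Closedness holds because $\Delta^n$ is the intersection of the finitely many closed half-spaces $\{\boldsymbol{x}_i\ge0\}$ with the closed hyperplane $\{|\boldsymbol{x}|_1=1\}$; boundedness holds because every coordinate lies in $[0,1]$. Setting $S=\Delta^n$ then gives $\{\boldsymbol{x}^{(k)}\}_{k=0}^\infty\subset S$ with $S$ compact, which is precisely the claim.

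The one point I would treat with care, and which I regard as the genuine obstacle, is the well-definedness of $B$ along the orbit, that is, positivity of the denominator $\boldsymbol{x}^TA\boldsymbol{x}$ at every iterate. Because $A$ has zero diagonal, $\boldsymbol{x}^TA\boldsymbol{x}=\sum_{i\ne j}a_{ij}\boldsymbol{x}_i\boldsymbol{x}_j$ vanishes exactly when the current support $\sigma(\boldsymbol{x})$ carries no positive internal affinity. Under the standing assumption that $A$ is a genuine similarity matrix with positive edges among the vertices at play, $B$ keeps the iterate on a support whose internal affinity stays strictly positive, so no division by zero occurs and the orbit is well-defined. Once this is secured, invariance together with Heine--Borel closes the argument, exactly paralleling the GS case of Proposition \ref{t_prop1}.
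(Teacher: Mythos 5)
Your proof is correct and follows essentially the same route as the paper, which simply refers this proposition back to the Appendix~B proof of Proposition~\ref{t_prop1}: show the iterates of Equation~(\ref{t_eq8}) remain convex combinations of the basis vectors $\boldsymbol{e}_i$ (i.e.\ stay in $\Delta^n$) and conclude compactness from closedness and boundedness. Your version is in fact slightly more careful than the paper's, since you make the inductive invariance explicit and flag the positivity of the normalizer $\boldsymbol{x}^TA\boldsymbol{x}$, which the paper leaves unaddressed.
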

For its proof, refer to Appendix B.
%The proof of Proposition \ref{t_prop1}.

\begin{proposition} \label{prop7}
If a mapping $f:S\to T$ is continuous on $S$, then $f$ is closed on $S$.
\end{proposition}

\begin{theorem}
Let $A = (a_{ij})^{(n\times n)}$ be a similarity matrix with diagonal values 0, $B$ be defined as Equation $(\ref{t_eq8})$, and
$\boldsymbol{x}^{(0)}$ be an arbitrary initial starting point, then either the iteration sequence $\{\boldsymbol{x}^{(r+1)}=B(\boldsymbol{x}^{(r)})\}, (r = 1, 2, \ldots)$  terminates at a point $\boldsymbol{x}^*$ in the solution set $\Gamma$ or there is a subsequence converging to a point in $\Gamma$.
\end{theorem}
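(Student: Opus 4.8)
The plan is to mirror the proof of Theorem 2 and fit the bare replicator mapping $B$ into the Zangwill framework of Theorem 1, exploiting the fact that the DSPC Algorithm drops the neighborhood expansion step so that its iteration mapping is simply $B$ rather than the composition $T_m = B^{m_k}\circ C$. Concretely, I would take $Z := \hat{g}(\boldsymbol{x}) = -g(\boldsymbol{x}) = -\boldsymbol{x}^T A \boldsymbol{x}$ as the continuous descent function and $B$ as the algorithm mapping $\boldsymbol{A}$ in Theorem 1, and then verify the three hypotheses \textbf{Compact}, \textbf{Decreasing}, and \textbf{Closed} in turn.

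For \textbf{Compact}, I would invoke Proposition \ref{t_prop8}: every iterate satisfies $\boldsymbol{x}^{(r)} \in \Delta^n$ because $B:\Delta^n\to\Delta^n$, and $\Delta^n$ is closed and bounded in $R^n$, hence compact, so the whole generated sequence lies in a compact set $S\subseteq\Delta^n$. For \textbf{Decreasing}, continuity of $Z$ is immediate since $g$ is a quadratic form and therefore a polynomial in the entries of $\boldsymbol{x}$. The strict-decrease clause follows from Proposition \ref{t_prop2}: off the solution set $\Gamma$ the trajectory of Equation (\ref{t_eq8}) is nonconstant, so $g$ strictly increases and $\hat{g}$ strictly decreases, giving $Z(\boldsymbol{y}) < Z(\boldsymbol{x})$ for $\boldsymbol{y} = B(\boldsymbol{x})$ when $\boldsymbol{x}\notin\Gamma$; on $\Gamma$ the points are fixed by $B$, so $Z(\boldsymbol{y}) = Z(\boldsymbol{x})$ and the weak inequality $Z(\boldsymbol{y})\le Z(\boldsymbol{x})$ holds. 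For \textbf{Closed}, I note that $B$ is continuous on $\Delta^n$ by its explicit formula (Equation (\ref{t_eq8})), and a continuous single-valued map is closed as a point-to-set mapping by Proposition \ref{prop7}; hence $B$ is closed at every point of $X\backslash\Gamma$.

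With all three Zangwill hypotheses verified, I would apply Theorem 1 directly: either the iteration halts at some $\boldsymbol{x}^*\in\Gamma$, or the generated sequence admits a subsequence converging to a point of $\Gamma$, which is exactly the claimed conclusion. Note that, unlike the GS case, no appeal to Lemma \ref{eq:lemma1} is needed here, since closedness comes for free from the continuity of the single map $B$ rather than from a composition argument.

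The step I expect to require the most care is the \textbf{Decreasing} condition, specifically the alignment between the strict and weak inequalities and membership in $\Gamma$. Proposition \ref{t_prop2} only guarantees strict increase along \emph{nonconstant} trajectories, so I must argue that for every $\boldsymbol{x}\notin\Gamma$ the replicator step is genuinely nonconstant, i.e., that the fixed points of $B$ are precisely the KKT points collected in $\Gamma$ as defined in Equation (\ref{t_eq3}). This identification of the fixed-point set of $B$ with $\Gamma$ is the crux that makes the strict-decrease hypothesis off $\Gamma$ and the weak-decrease hypothesis on $\Gamma$ hold simultaneously; once it is in hand, everything else reduces to assembling the cited propositions.
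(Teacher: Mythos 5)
Your proposal is correct and follows essentially the same route as the paper: take $\hat{g}=-\boldsymbol{x}^TA\boldsymbol{x}$ as the Zangwill descent function, use Proposition \ref{t_prop8} for compactness, Proposition \ref{t_prop2} for the monotonicity, and Proposition \ref{prop7} (continuity of $B$ implies closedness) in place of the composition argument, then invoke Theorem 1. Your added remark that the argument hinges on identifying the fixed-point set of $B$ with $\Gamma$ is a point the paper's own proof leaves implicit, but it does not change the approach.
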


\begin{proof}
Taking $\hat{g}(\boldsymbol{x})=-g(\boldsymbol{x})=-\boldsymbol{x}^TA\boldsymbol{x}$ as the continuous function $Z$ and $B$ as the algorithm mapping $A$ in Theorem 1. Proposition \ref{t_prop8} shows that the sequence set $\{\boldsymbol{x}^{(k)}\}_{k=1}^{\infty}$ generated by $B$ is a compact set. $\hat{g}(\boldsymbol{x})$ is continuous and strict decreasing according to the continuity and strict increase of $g(\boldsymbol{x})$ in the trajectory of $T_m$ are proven in Proposition \ref{t_prop2}. Proposition \ref{prop7} asserts $B$ is closed on $x/\Gamma$, while $\Gamma$ is the solution set defined in in Equation (\ref{t_eq3}). According to the Zangwill's convergence theory, Theorem 3 holds.
\end{proof}

%Here we further propose a framework with convergence properties for the convergence proof of GS-type algorithms.
There are many similarities shared between these two algorithms' proving process, along with their similar properties. Thus, we can propose a Zangwill's theorem-based convergence framework for the "similar algorithms". Firstly, the so-called "GS-type algorithm" is defined referring to what we called "similar algorithms".
\begin{definition}
An algorithm is a GS-type algorithm if and only if it satisfies the conditions on three key components: simplex of generated sequence set, monotonic and continuous objective function and closed mapping.
\end{definition}
%We define algorithms satisfied the above three key conditions as GS-typed algorithm.
% As shown above, the generated sequence set  by a GS-type algorithm is convergent or contains a convergent subsequence set.

We provide a flowchart (Fig. \ref{fig_f}) to illustrate our prooving process in details. It presents a guideline proving the convergence of GS-type algorithms step by step. (1) Break down the GS-type algorithm into three parts: generated set, objective function and mapping. (2) Check if these three parts all satisfy the corresponding requirements. Any part unsatisfied is regarded as unsuitable for applying this framework, otherwise it is convergent.

\begin{figure}[!ht]
\caption{Proving framework for GS-type algorithm convergence}
\centering
\includegraphics[scale=0.30, width = 0.3 \textwidth, bb = 65 350 503 727, clip]{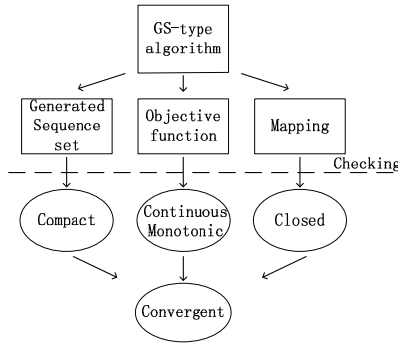}
\label{fig_f}
\end{figure}

Among the two processes in Fig. \ref{fig_f}, properties verification is usually the difficult part, especially the objective function's monotonic behavior. We will further discuss its behavior in the Section \ref{sec_6}.

\section{Experimental Verification} \label{sec_6}

% In this section, experimental results that illustrate the convergence properties of GSA are presented in various scenarios. For experimental results that illustrate performance advantages of GSA or DSPC algorithm, readers are referred to \cite{pavan2007dominant}\cite{liu2010robust}.

% Both GS algorithm and DSPC algorithm shows an amazing performance in application, especially image processing \cite{pavan2003new},\cite{chenmulti},\cite{zhao2011robust}. According to the goal of this paper, its convergence rate is focused.

The GS Algorithm and the DSPC Algorithm are all implemented in MATLAB2011b. Our experiments are conducted on an Acer Aspire 4720Z laptop having an Intel Pentium DualCoreT2330(1.6GHz, 533MHz FSB, 1MB L2 cache), with 2GB DDR2 RAM, using LINUX operating system.
\subsection{Experimental Settings}
Since GS-type algorithms manipulate data based on a similarity matrix,
% rather than a vectorial representation of data points, its conducted data is designed to be one similarity matrix assuming to be abstracted from the real data set,
we construct a similarity matrix instead of real data sets with its element values uniformly sampling within the interval $[0,1]$, and the matrix dimensionality scaling from $100$ to $3000$. Also, we consider cases in which the matrices are fully dense matrices (FDM), partially dense matrices (PDM), and block tridiagonal matrices (BTM).

% The convergence rate, including iteration time, running time is tested according to our goal.
%To simplify the algorithm, we set a threshold $\eta = 0.0001$ so that if $\boldsymbol{x}_i^{(m)} < \eta$ then $\boldsymbol{x}_i^{(m)}=0$.

Initial starting point $\boldsymbol{x}$ can be randomly chosen or be the single vertice $\{I_i, i= 1,\cdots, n.\}$. In our experiments, we use the single vertice with the same as \cite{liu2010common}. We test the GS Algorithm \cite{liu2010robust} as well as the DSPC Algorithm \cite{pavan2007dominant}. Each algorithm runs for three times to obtain an averaged performance.
We verify the proposed the GS Algorithm convergence theory through experiments and focus on testing convergence performance. The number of transformations ($m_k$) in Replicator Dynamics, the whole iteration number ($m$), running time ($T$), and average iteration running time ($t=T/m$) are presented to evaluate the convergence performance.

\begin{table*}[!htb] \scriptsize \small
\caption{\small{the GS Algorithm and the DSPC Algorithm's testing results}}
\centering
\begin{threeparttable}
\begin{tabular}{@{}cllllll|cllll@{}}\toprule
\multicolumn{7}{c|}{the GS Algorithm} & \multicolumn{5}{c}{the DSPC Algorithm} \\
\cmidrule(r){1-12}
 Case & Scale       & $m_k$    &  $m$    & $T(s)$ & $t(s)$ & S.R.(\%)\tnote{1} & Case & Scale   & $m_k$        & $T(s)$  & S.R.(\%)\tnote{1}  \\ \midrule
 FDM& 100         & 1228.5   & 2.52    &       0.12    &   0.05 & 100.0&FDM& 100         & 953.2     &     0.14   &  100.0   \\
 &500         & 1297.3   & 2.96     &         0.40      &   0.14 &100.0& & 500         & 1298.5    &     0.37  &   100.0   \\
 &1000        & 1101.7   & 3.75     &         1.01      &   0.27 &100.0& &  1000        & 1604.2    &    1.10  &   100.0   \\
 &1500        & 1401.2   &    3.24 &        3.87       &   1.19 &100.0 & & 1500        & 1469.6    &    4.25  &   100.0    \\
 &2000        & 1575.4   &    3.49  &        20.00      &   5.73 &100.0& & 2000        & 1448.1    &     18.61 &   100.0   \\
 &3000        & 1511.4   &    3.56  &         50.15     &   14.09 &100.0& & 3000        & 1785.3    &    56.20 &   100.0    \\\midrule
 PDM& 100         & 236.3    & 2.22      &   0.02 & 0.01 & 26.1  & PDM    &  100        &    203.3   &  0.02   & 26.7\\
 &500         & 285.9    & 2.24      &   0.08  & 0.04 & 26.2      &        &   500       &     299.6  &   0.07 & 26.3\\
 &1000        & 275.4    & 2.38     &   0.20 & 0.09 & 26.4      &   &   1000       &    326.7    & 0.15  & 26.4\\
 &1500        & 275.9    &     2.40   &     0.94 & 0.39 & 26.4   &       &   1500       &    338.2  &   0.55  & 26.3\\
 &2000        & 348.1    &    2.41   &     3.32 & 1.38 & 26.3   &      &   2000       &    288.8  &   2.11  & 26.4\\
 &3000        & 332.9    &    2.50   &   7.75 & 3.10 & 26.3     &        &   3000       &    389.3   &  6.85  & 26.3\\\midrule
BTM&100         & 794.9     & 2.16     &      0.06  &     0.03 &  22.0 & BTM    &  100        &  717.6     &  0.05   & 22.0\\
&500         & 1185.3    & 2.65       &      0.31  &    0.12   & 22.0&       &   500       &  1177.7     &  0.28  & 22.0\\
&1000        & 1221.6    & 2.67      &    0.68     &   0.25     &22.0&   &   1000       &  1130.8  & 0.74  & 22.0\\
&1500        & 1198.6    &   2.97    &    2.96     &   1.00        &22.0&      &   1500       & 1226.5   & 2.63   & 22.0\\
&2000        & 1289.9    &    3.02    &      13.57  &   4.50       &22.1 &     &   2000       & 1417.8  & 13.24   & 22.0\\
&3000        & 1406.2    &   3.13    &     38.55 &    12.34     &22.0&        &   3000       & 1517.4  & 38.16   & 22.0\\\bottomrule
\end{tabular}
      \begin{tablenotes}
        \footnotesize
        \item[1] \tiny{S.R. refers to the sparse rate, the proportion of the number of non-zero elements to the number of whole elements.}
      \end{tablenotes}
    \end{threeparttable}
\label{ttab3}
\end{table*}

\subsection{Objective function behavior}
\begin{figure*}[ht]
    \begin{tabular}{cc}
    \begin{minipage}[t]{2.1in}
    \includegraphics[width=2.1in]{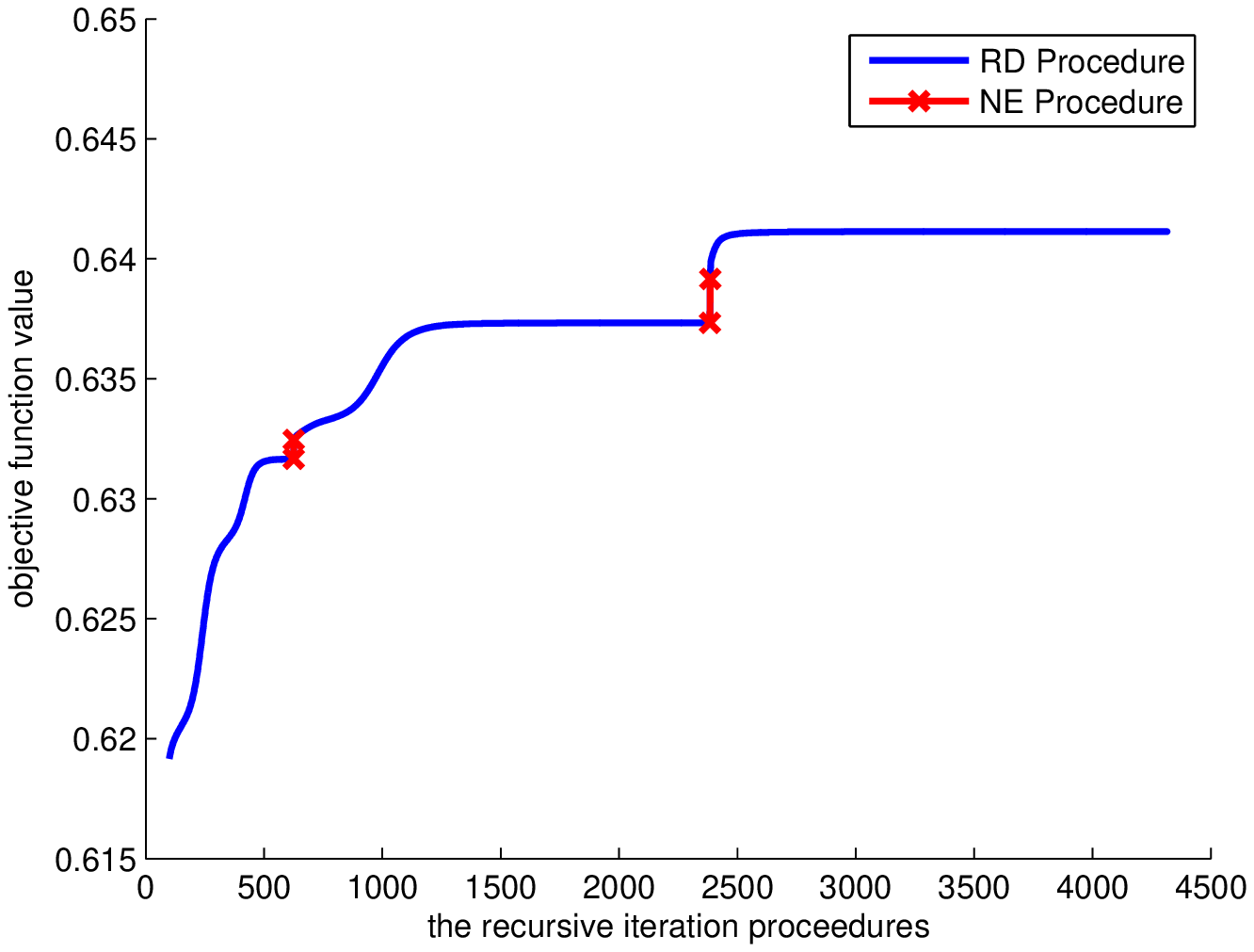}
    %\caption{Ve}
    \end{minipage}
    \begin{minipage}[t]{2.1in}
    \includegraphics[width=2.1in]{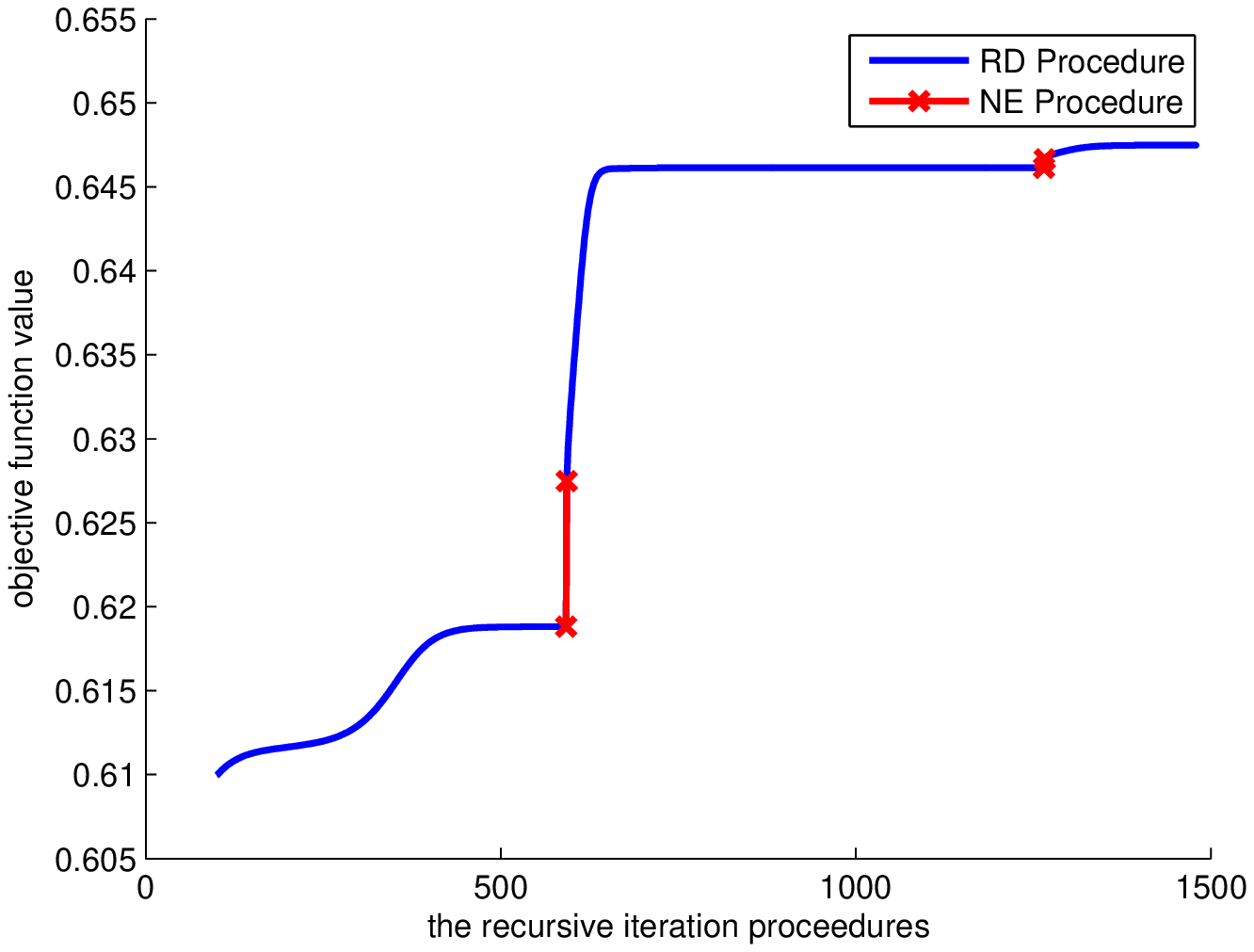}
    %\caption{This is a caption}
    \end{minipage}

    \begin{minipage}[t]{2.1in}
    \includegraphics[width=2.1in]{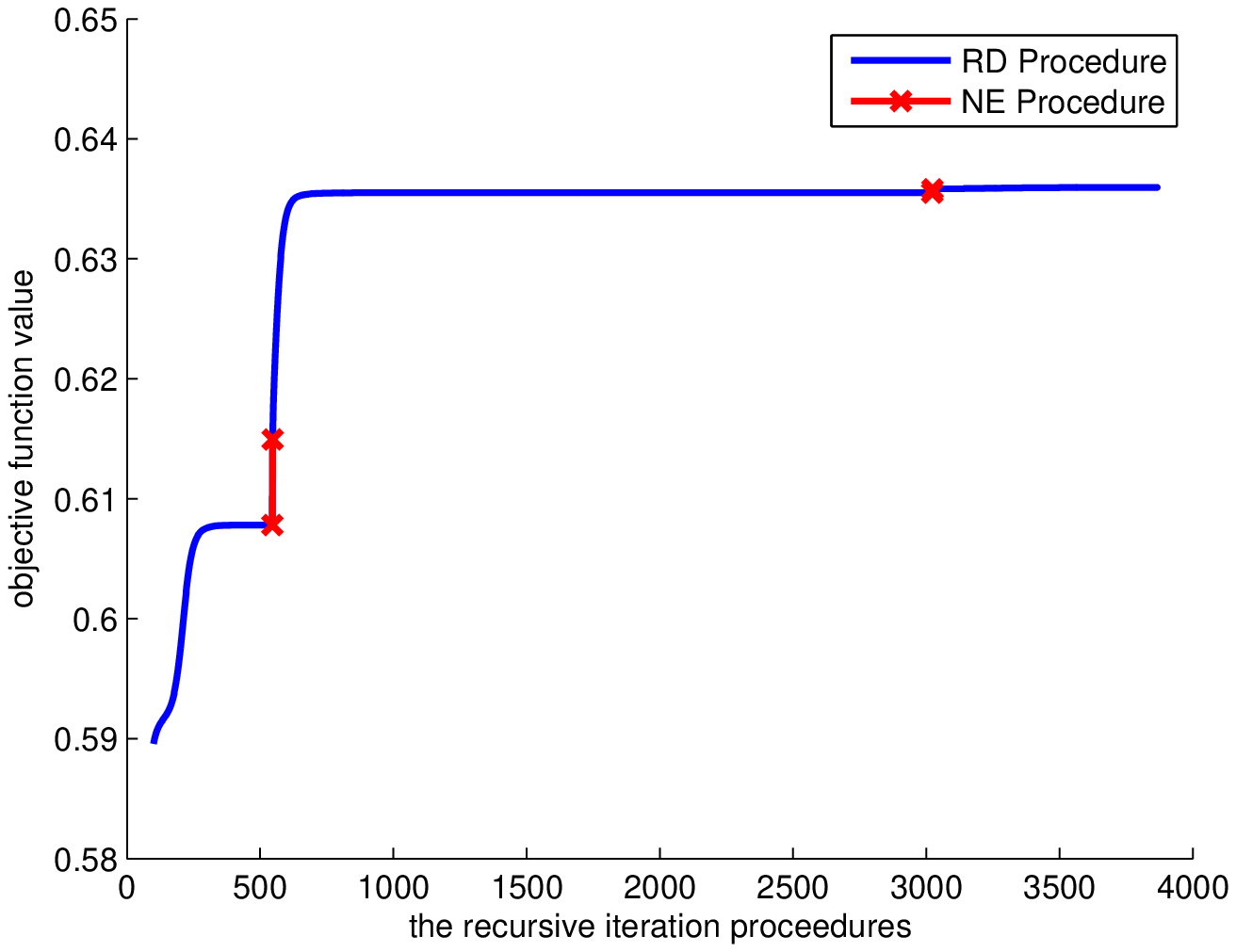}
    %\caption{This is a caption}
    \end{minipage}
\end{tabular}
\caption{Vertex 71(left), 231(middle) and 265(right)'s objective functions' behaviors(For better view, please see color pdf.)}
\label{fig_b}
\end{figure*}

Fig. \ref{fig_b} depicts the behaviors of three representative vertices' corresponding objective function in the GS Algorithm under the PDM case and 500 scale, along with the candidate solution evolving. The $X$-axis represents the evolving process times , and the $Y$-axis stands for the objective function's value. To be fairly compare the performance, we take $B$ and $C$'s time as equal. Thus, the rate of evolving time is $m_k:1$ between Replicator Dynamics and Neighborhood Expansion in the $k$-th iteration, according to Equation (\ref{t_eq7}).

From these three vertices' evolving behaviors, we can see all the three vertices reach a local maximum value. The objective function's value kept increasing with the evolving process, which is in accordance with the Proposition \ref{t_prop4}. This is perfectly matched with the theorem we have proposed.

When the curve first starting from the Replicator Dynamics, it always produces a steep increasing trends in the first few steps and then turn to a gentle curve in the last. What is more, when encounters the Neighborhood Expansion procedure, it always produce a huge jump comparing to the flat curve in the Replicator Dynamics. This is because the Neighborhood Expansion procedure is one that pulling the candidate solution from one local dense subgraph towards the dense graph and the dense value will change much with the subgraph changes.

Also, we can notice that most of the evolving time is the Replicator Dynamics, denoting that most of the calculations were done in Replicator Dynamics, i.e., searching the dense subgraph.

\subsection{Convergence performance}
The convergence performance of the GS Algorithm and the DSPC Algorithm depends on many factors: scale of the affinity matrix, matrix structure, element value, etc. We test different scenarios by changing the scale of matrix and the sparseness of matrix. The scenarios and results are given in Table \ref{ttab3}.

The experimental results presented in the GS Algorithm and the DSPC Algorithm are a little different for their different mapping definitions, with the former being $T_m = B^{m_k}\circ C$ and the later being $B$.

The results show that both the GS Algorithm and the DSPC Algorithm converge under the cases with FDM, PDM and BTM. In each case, transformation $B$'s number $m_k$ and iteration $T_m$'s number $m$ increase slowly when matrix's scaling grows, sometimes even with a small drop, e.g., in FDM case of the GS Algorithm, $m_k$'s values in the third row and $m$'s value in the fourth row are both smaller than the previous ones although matrices' scale increases. What is more, when the matrice's scale is 3000, which is 30 times of the first one's scale in each case, its $m_k$ and $m$'s values are less than 2 times larger. This indicates that if the dense rate of the matrix is fixed, the computational cost is always under control even though the matrices' scale increases.

 Also, the GS Algorithm's transformation number, iteration number and running time all decrease when the matrix becomes sparser, the same with the DSPC Algorithm's transformation number and running time. This indicates that if we incorporate prior information and setting the unrelated nodes' similarity value as 0, then we can save much computational cost in the application. However, both of the two Algorithms performs worse on the block tridiagonal matrices case compared to the partially dense matrices case, even with a smaller sparse rate. This is due to the fact that block tridiagonal matrices' calculation is quite similar to the one of smaller scale with full dense matrices. As shown above, it is heavily computational loaded.

% The results show that the whole iteration time increases as the scale become larger, and also the performance will be better if the matrix become sparse. Among these and the most important, it is convergent.

\section{Conclusions and Future work} \label{sec_7}
Graph Shift (GS) Algorithm shows great advantage on efficiently dealing with noisy data, however no theoretical outcome has been reported about its algorithm behavior and what is more, its convergence proof. In this paper, we have proposed a generic theoretical framework to prove the convergence of GS-type algorithms. A GS-type Algorithm consists of three key components: simplex of generated sequence set, monotonic and continuous objective function and closed mapping. They are mapped to the Zangwill's convergence theorem's three key conditions. Consequently, the convergence of the GS Algorithm is proved by applying the Zangwill convergence theorem.
% this paper asserted that GSA's generated sequence set is convergence, or at least, contains a convergent subsequence set.

We have shown that the framework can be applied to GS-type algorithms, as well as the Dominant set and pairwise clustering (DSPC) Algorithm. Experimental results on both the GS Algorithm and the DSPC Algorithm certified that they both converge under different scenarios in terms of the scale of the affinity matrix, the transformation number, and the iteration number.

However, this paper limits the generated sequence set under the simplex case, more work will be done to expand it to other compact set. What is more, Banach's contraction theory and optimization methods on nonconvex mathematical program are being considered in our problem so as to suit in a more general case.

%\section{Acknowledgments}

%
% The following two commands are all you need in the
% initial runs of your .tex file to
% produce the bibliography for the citations in your paper.
\bibliographystyle{abbrv}
\bibliography{convergence}  % sigproc.bib is the name of the Bibliography in this case
% You must have a proper ".bib" file
%  and remember to run:
% latex bibtex latex latex
% to resolve all references
%
% ACM needs 'a single self-contained file'!
%
%APPENDICES are optional
%\balancecolumns
\appendix
%Appendix A
\section{Definition of $\Delta\boldsymbol{x}$}
According to \cite{liu2010robust}, $\Delta\boldsymbol{x}=t^*\boldsymbol{b}$,
$t^*$  and $\boldsymbol{b}$ are defined as:
\begin{equation} \label{eq:eq7}
\boldsymbol{b}= \left\{ \begin{array}{cc}
-\boldsymbol{x}_i s & i\in\sigma(\boldsymbol{x});\\
v_i, & i \notin \sigma(\boldsymbol{x}).
\end{array} \right.
\end{equation}

\begin{equation} \label{eq:eq8}
t^*= \left\{ \begin{array}{cc}
\frac{1}{s}, & \textrm{if}~\lambda s^2+2s\zeta-\omega\le 0\\
\textrm{min}(\frac{1}{s}, \frac{\zeta}{\lambda s^2+2s\zeta-\omega}), & \textrm{if}~\lambda s^2+2s\zeta-\omega> 0
\end{array} \right.
\end{equation}
where
\begin{equation}
v_i= \left\{ \begin{array}{cc}
0, & i\in\sigma(\boldsymbol{x})\\
\textrm{max}(a(\boldsymbol{x}, I_i)-g(\boldsymbol{x}), 0), & i \notin \sigma(\boldsymbol{x})
\end{array} \right.
\end{equation}
\begin{equation}
s=\sum_{i\notin\sigma(\boldsymbol{x})} v_i, \zeta = \sum_{i\notin\sigma(\boldsymbol{x})} v_i^2, \omega = \sum_{i,j} v_ia_{ij}v_j.
\end{equation}
$g(\boldsymbol{x}+\Delta\boldsymbol{x})-g(\boldsymbol{x})=-(\lambda s^2+2s\zeta-\omega)t^2+2\zeta t$.
When $\lambda s^2+2s\zeta-\omega\le0$, $g(\boldsymbol{x}+\Delta\boldsymbol{x})-g(\boldsymbol{x})\ge0$;
When $\lambda s^2+2s\zeta-\omega<0$, $t^*$ always lies in the interval $[0,  \frac{2\zeta}{\lambda s^2+2s\zeta-\omega}]$, which are the two solutions of $g(\boldsymbol{x}+\Delta\boldsymbol{x})-g(\boldsymbol{x})=0$, this leads to $g(\boldsymbol{x}+\Delta\boldsymbol{x})-g(\boldsymbol{x})>0$.

\section{Proof of Proposition 1}
\begin{proof}
To prove that the sequence set $\{\boldsymbol{x}_{(k)}\}_{k=0}^{\infty}\subset S$ is compact is equivalent to prove that the set is bounded and closed.
Since $(\boldsymbol{x}_1,\boldsymbol{x}_2,\cdots, \boldsymbol{x}_n)$ are all located in $[0, 1]$, the $\boldsymbol{x}$ value space is bounded. Also, from Equation (\ref{t_eq8}), $\boldsymbol{e}_i=(0,...,1,...,0)$, which means the $i$-th component of $\boldsymbol{x}$ is 1 and the others are 0, we can denote $\boldsymbol{x}(t)$ as:
\begin{equation}
\boldsymbol{x}(t)=\sum_{i=1}^n \boldsymbol{e}_i\cdot\frac{\boldsymbol{x}(t-1)_i(A\boldsymbol{x}_i(t-1))}{\boldsymbol{x}(t-1)^T A\boldsymbol{x}(t-1)}
\end{equation}
Since $\sum_{i=1}^n \frac{\boldsymbol{x}(t-1)_i(A\boldsymbol{x}_i(t-1))}{\boldsymbol{x}(t-1)^T A\boldsymbol{x}(t-1)}
=1$ and $0\le\frac{\boldsymbol{x}(t-1)_i(A\boldsymbol{x}_i(t-1))}{\boldsymbol{x}(t-1)^T A\boldsymbol{x}(t-1)}\le 1,i=1,\ldots,n.$. Adding $\Delta\boldsymbol{x}$ (defined in Appendix A) still holds the expression $\boldsymbol{x}(t)=\sum_{i=1}^n \boldsymbol{e}_i\cdot y_i, \sum_{i=1}^n y_i=1$, thus $\boldsymbol{x}(t)$ is in the convex hull of $S$, so it is closed.
Therefore, the sequence set $\{\boldsymbol{x}_{(k)}\}_{k=0}^{\infty}\subset S$ is both bounded and closed.
\end{proof}

\section{Proof of Proposition 2}
\begin{proof}
This proposition is known in mathematical biology as the fundamental theory of natural selection \cite{hofbauer1998evolutionary} and, in its original form, we can trace it back to \cite{fisher1930genetical}.

We can also prove that Proposition \ref{t_prop2} is a special case of Baum-Eagon inequality (\cite{baum1967inequality}).
%Due to space limit, we ignore the details here.
 We denote $x_i$ as $x_i = \prod_{j=1}^n x_j^{\mu_{ij}}$, here $
 \mu_{ij}= \left\{ \begin{array}{cc}
 1, & i=j;\\
 0, & i\neq j.
 \end{array} \right.
 $. Thus we wish to prove that when $\boldsymbol{x}(t)\notin\Gamma$:
 \begin{equation}
 \begin{split}
 g(x)=&x^TAx=\sum_{i=1}^n \omega_i x_i=\sum_{i=1}^n \omega_i\prod_{j=1}^n x_j^{\mu_{ij}}\\
 &<\sum_{i=1}^n\omega_i\prod_{j=1}^n J(x_j)^{\mu_{ij}}.
 \end{split}
 \end{equation}

 From H\"older inequation and $x_i^2=x_i\cdot\prod_{j=1}^n x_j^{\mu_{ij}}$, we can get the result as:
 \begin{equation}
 \begin{split}
 g(x)&=\sum_{i=1}^n \{\omega_i\cdot\prod_{j=1}^n J(x_j)^{\mu_{ij}}\}^{\frac{1}{2}}\times\{\omega_{i}^{\frac{1}{2}}x_i\prod_{j=1}^{n}(\frac{1}{J(x)})^{\frac{\mu_{ij}}{2}}\}\\
 & \le \{\sum_{i=1}^n \omega_i\prod_{j=1}^n J(x_j)^{\mu_{ij}}\}^{\frac{1}{2}}\times\{\sum_{i=1}^n \omega_i x_i \prod_{j=1}^n (\frac{x_j}{J(x_j)})^{\mu_{ij}}\}^{\frac{1}{2}}
 \end{split}
 \end{equation}
 Equality holds if and only if $\forall p,q\in\{1,\cdots, n\}$,
 \begin{equation}
 \begin{split}
 &\frac{\{\omega_{p}\cdot\prod_{j=1}^n J(x_j)^{\mu_{{p}j}}\}^{\frac{1}{2}}}{\omega_{p}^{\frac{1}{2}}x_{p}\prod_{j=1}^{n}(\frac{1}{J(x)})^{\frac{\mu_{{p}j}}{2}}}
 =\frac{\{\omega_{q}\cdot\prod_{j=1}^n J(x_j)^{\mu_{{q}j}}\}^{\frac{1}{2}}}{\omega_{q}^{\frac{1}{2}}x_{q}\prod_{j=1}^{n}(\frac{1}{J(x)})^{\frac{\mu_{{q}j}}{2}}}\\
 \iff & \frac{J(x_{p})}{x_{p}}=\frac{J(x_{q})}{x_{q}} \iff \omega_{p}=\omega_{q}
 \end{split}
 \end{equation}
 Using the inequality of geometric and arithmetic means to the double products of the second brace, we can conclude:
 \begin{equation}
 \begin{split}
 &\sum_{i=1}^n \omega_i x_i \prod_{j=1}^n (\frac{x_j}{J(x_j)})^{{\mu_{ij}}}\le\sum_{i=1}^n \omega_i x_i \sum_{j=1}^n {\mu_{ij}}\cdot\frac{x_j}{J(x_j)}\\
 &= \sum_{i=1}^n \omega_i x_i \sum_{j=1}^n \mu_{ij}x_j\cdot\frac{\sum_{k=1}^n \omega_kx_k}{\omega_jx_j}\\
 &=\sum_{k=1}^n \omega_k x_k \cdot\sum_{j=1}^n x_j\cdot\frac{\sum_{i=1}^n \omega_i x_i\cdot\mu_{ij}}{\omega_j x_j}\\
 &=\sum_{k=1}^n \omega_k x_k \cdot\sum_{j=1}^n x_j= \sum_{k=1}^n \omega_k x_k.
 \end{split}
 \end{equation}
 Equality holds if and only if $\forall p,q\in\{1,\cdots, n\}$,
 \begin{equation}
 \frac{x_p}{J(x_p)}=\frac{x_p}{J(x_p)} ~\iff~ \omega_p = \omega_q.
 \end{equation}
 Here the last equation succeed because $\omega_ix_i\cdot\mu_{ij}= \omega_jx_j$ if and only if $j=i$, otherwise it is $0$, and $\sum_{j=1}^n x_j=1.$\\

 we put the result into the second braces, and get
 \begin{equation}
 \begin{split}
  & \sum_{i=1}^n \omega_i x_i\le \{\sum_{i=1}^n \omega_i \prod_{j=1}^n J(x_j)^{\mu_{ij}}\}^{\frac{1}{3}}\times\{\sum_{i=1}^n \omega_ix_i\}^{\frac{2}{3}}\\
 \iff & \{\sum_{i=1}^n \omega_ix_i\}^{\frac{1}{3}}\le\{\sum_{i=1}^n \omega_i \prod_{j=1}^n J(x_j)^{\mu_{ij}}\}^{\frac{1}{3}} \\
 \iff & \sum_{i=1}^n \omega_ix_i\le\sum_{i=1}^n \omega_i \prod_{j=1}^n J(x_j)^{\mu_{ij}}
 \end{split}
 \end{equation}
 Equality holds if and only if $\omega_i=\omega_j, \forall i,j\in\{1, \cdots, n\}$. It is the situation contained by the solution set $\Gamma$.\\
 Thus, the function $g(x)=\boldsymbol{x}^tA\boldsymbol{x}$ is strictly increasing along any nonconstant trajectory of (\ref{t_eq8}) when $\boldsymbol{x}\in X/\Gamma$.
 \end{proof}

\section{Proof of Proposition 4}
\begin{proof}
The continuity of $g(\boldsymbol{x})=\boldsymbol{x}^T A\boldsymbol{x}$ is obvious. We here discuss the increasing monotonicity.
From Equation (\ref{t_eq7}), we have
\begin{equation}
g(\boldsymbol{x})\le g(C(\boldsymbol{x}))<g(B\circ C(\boldsymbol{x}))=g(T_m(\boldsymbol{x}))
\end{equation}
\end{proof}

\section{Proof of Proposition 5}
\begin{proof}
If $\boldsymbol{x}^0\in X$, then $\boldsymbol{x}^n\to \boldsymbol{x}^0$ and $\boldsymbol{y}^n\to\boldsymbol{y}^0$ when $n\to\infty$. Consequently $\boldsymbol{y}^n\in C(\boldsymbol{x}^n)$, indicating that
\begin{equation}
\boldsymbol{y}^n = \boldsymbol{x}^n + \Delta\boldsymbol{x}^n
\end{equation}
and we need to prove that
\begin{equation}
\boldsymbol{y}^0 = C(\boldsymbol{x}^0)= \boldsymbol{x}^0 + \Delta\boldsymbol{x}^0.
\end{equation}
This lies in two situations:
\begin{enumerate}
\item[(1)] $\boldsymbol{x}^0$ is in $\Gamma$, thus $\Delta\boldsymbol{x}^0\ = 0$.
\item[(2)] $\boldsymbol{x}^0$ is in $X\backslash \Gamma$.
\end{enumerate}
%$\Delta\boldsymbol{x}^0=t^{*0}\cdot\boldsymbol{b}^0$, since $\boldsymbol{x}^0$ is fixed, we got $v_i^0, s^0, \zeta^0, \omega^0$ fixed, so does $\boldsymbol{b}^0, t^0$. \\
For situation (2), as $\boldsymbol{x}^n\to\boldsymbol{x}^0$ and $\boldsymbol{y}^n\to\boldsymbol{y}^0$, we can find a large $N$ such that $\forall \varepsilon > 0, \exists n>N,~\textrm{so that}~|\boldsymbol{x}^n - \boldsymbol{x}^0|<\varepsilon, |\boldsymbol{y}^n - \boldsymbol{y}^0|<\varepsilon$. Consequently, we have
\begin{equation}
\begin{split}
&|\boldsymbol{y}^0-C(\boldsymbol{x}^0)|\le|\boldsymbol{y}^0-\boldsymbol{y}^n|+|\boldsymbol{y}^n-C(\boldsymbol{x}^n)|+|C(\boldsymbol{x}^n)\\
&-C(\boldsymbol{x}^0)| \le\varepsilon + |\boldsymbol{x}^n - \boldsymbol{x}^0|+|\Delta\boldsymbol{x}^0-\Delta\boldsymbol{x}^n|\\
& \le 2\varepsilon + |\Delta\boldsymbol{x}^0-\Delta\boldsymbol{x}^n|\\
\end{split}
\end{equation}
According to Lemma \ref{eq:lemma2} and the definitions of $\boldsymbol{b}$ and $t$, it is a continuous mapping on $\boldsymbol{x}$, this results in that $\forall\varepsilon>0, \exists\delta$ such that $|\Delta\boldsymbol{x}^0-\Delta\boldsymbol{x}^n|\le \varepsilon$.
So, $\forall\varepsilon>0$, if $N_1$ is big enough, $n>N_1, |\boldsymbol{x}^n - \boldsymbol{x}^0|<$min$(\varepsilon/3, \delta)$, $|\boldsymbol{y}^n - \boldsymbol{y}^0|<\varepsilon/3$, then
\begin{equation}
|\boldsymbol{y}^0-C(\boldsymbol{x}^0)|\le\varepsilon
\end{equation}
Hence $\boldsymbol{y}^0=C(\boldsymbol{x}^0)$, and $C$ is closed on $X\backslash \Gamma$.
\end{proof}
\balancecolumns
% That's all folks!

\end{document}